\documentclass[11pt]{article}

\usepackage[preprint]{acl}

\usepackage{times}
\usepackage{latexsym}

\usepackage[T1]{fontenc}

\usepackage[utf8]{inputenc}

\usepackage{microtype}

\usepackage{inconsolata}

\usepackage{graphicx}

\usepackage{multirow}
\usepackage[table]{xcolor}
\usepackage{wrapfig}
\usepackage{lipsum}
\usepackage{tabularx}
\usepackage{booktabs} 
\usepackage{enumitem}

\usepackage{amsmath, amssymb, amsthm}
\usepackage[most]{tcolorbox}
\usepackage{makecell}
\usepackage{soul}

\definecolor{custom_green}{HTML}{ACE8D7}
\definecolor{custom_green_light2}{HTML}{EBF9F5}
\definecolor{custom_green_light}{HTML}{C8F6E9}
\definecolor{custom_green_dark}{HTML}{34B591}
\definecolor{custom_green_dark2}{HTML}{006246}
\definecolor{custom_pink}{HTML}{FFD2E0}
\definecolor{custom_pink_light2}{HTML}{FFF0F5}
\definecolor{custom_pink_light}{HTML}{FFE9F0}
\definecolor{custom_pink_dark}{HTML}{DE72AA}
\definecolor{custom_pink_dark2}{HTML}{962660}

\definecolor{custom_purple_dark}{HTML}{270069}

\newcommand{\hlgreen}[1]{{\sethlcolor{custom_green_light}\hl{#1}}}
\newcommand{\hlpink}[1]{{\sethlcolor{custom_pink_light}\hl{#1}}}

\theoremstyle{plain}
\newtheorem{theorem}{Theorem}[section]
\newtheorem{proposition}[theorem]{Proposition}

\theoremstyle{definition}

\newtheorem{assumption}[theorem]{Assumption}
\theoremstyle{remark}

\title{Understanding Reasoning in LLMs through \\Strategic Information Allocation under Uncertainty}

\author{
  Jeonghye Kim$^{1,2}$\thanks{Work done during Jeonghye's internship at Microsoft Research. \textsuperscript{$\dagger$} Corresponding author. Our analysis code can be found at \href{https://github.com/beanie00/strategic-information-allocation-llm-reasoning}{\texttt{link}}.}, Xufang Luo$^1$\textsuperscript{$\dagger$}, Minbeom Kim$^3$, \\
  {\bfseries Sangmook Lee$^3$, Dongsheng Li$^1$, Yuqing Yang$^1$} \\[5pt]
  $^1$Microsoft Research\, $^2$KAIST, $^3$Seoul National University
}

\begin{document}
\maketitle
\begin{abstract}

LLMs often exhibit Aha moments such as self-correction after tokens like "Wait," yet the underlying mechanism remains unclear. Standard LLMs collapse mainly through silent divergence, where trajectories drift from the correct answer yet remain locally coherent, so no explicit error triggers reactive self-correction. We introduce an information-theoretic framework that separates reasoning into procedural advancement and \emph{epistemic verbalization}, the token-level externalization of uncertainty, and prove that sporadic verbalization restores convergence toward the correct answer even without explicit error triggers. Empirically, a minimal doubt cue recovers failed trajectories, and small-scale SFT suffices to instill or suppress this capability, suggesting that strong reasoning hinges less on an extraordinary inner mechanism than on the linguistic habit of externalizing uncertainty. Our framework recasts reasoning as strategic information allocation under uncertainty, offering a new lens for understanding and advancing LLM reasoning.

\end{abstract}

\addtocontents{toc}{\vskip 0.5em}
\addtocontents{toc}{\protect\contentsline{section}{[Main]}{}{}} 

\section{Introduction}

Recent large language models (LLMs) often exhibit so-called Aha moments during reasoning, with behaviors such as self-correction or reflection appearing after tokens like "Wait" \citep{deepseek, understanding_aha}. These phenomena are frequently cited as key mechanisms underlying effective reasoning, yet there remains little consensus on what computational or informational role they actually play \citep{d2026illusion, maynot_aha, tsui2025self}. Prior work tends to group Aha moments, reflection, self-correction, and the emergence of specific tokens as a single class of phenomena, making it difficult to disentangle their underlying mechanisms.

\begin{figure}
    \centering
    \includegraphics[width=\linewidth]{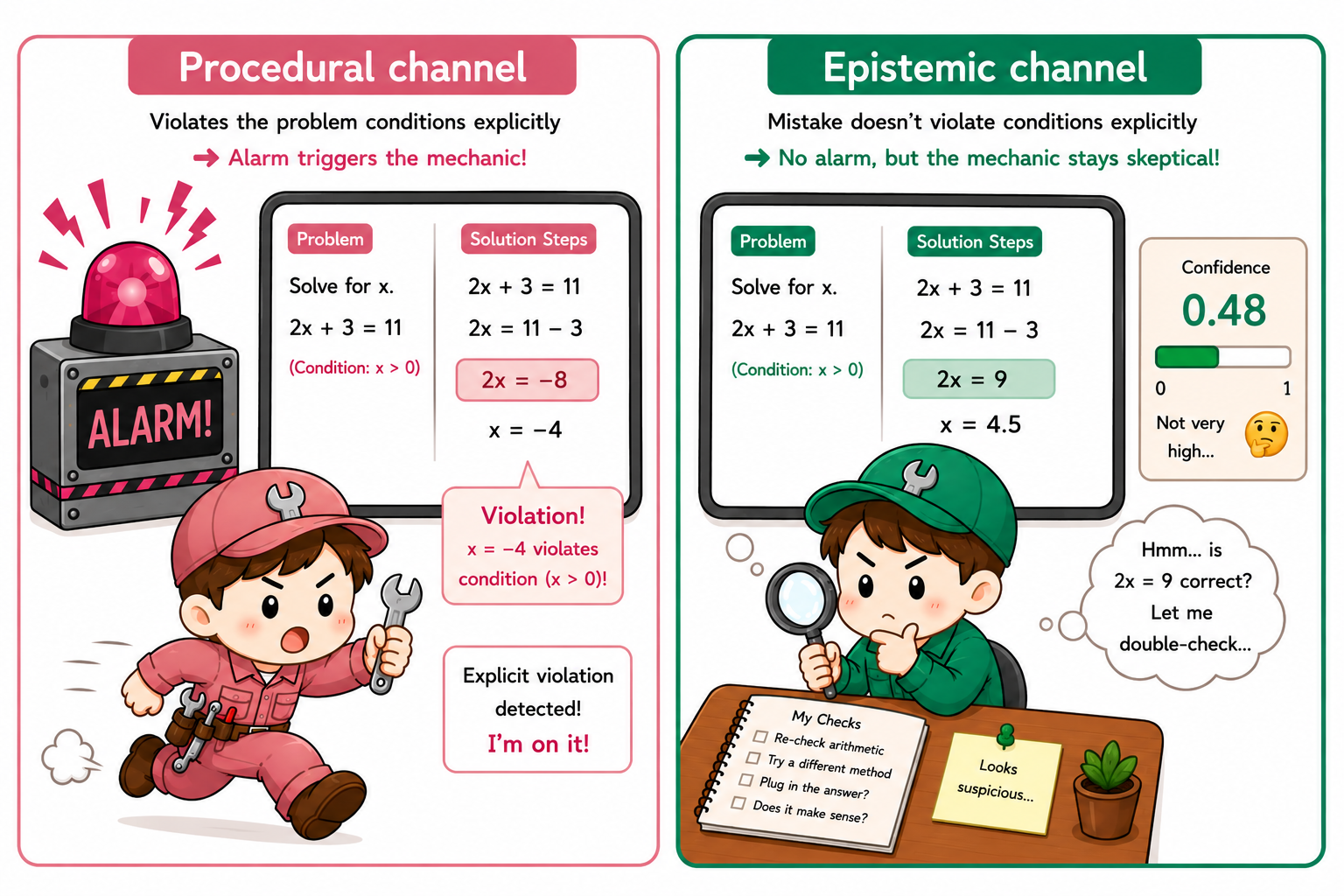}
    \caption{\textbf{Two pathways for self-correction.}
The Procedural channel reacts only when an explicit violation triggers an alarm, whereas the Epistemic channel continuously monitors and prompts self-checking even without any visible contradiction.}
    \label{fig:concept}
\end{figure}

In parallel, recent studies have examined reasoning from an information-theoretic perspective \citep{understanding_cot, liang2025chain}, reinterpreting Chain-of-Thought (CoT) \citep{cot} as a process of information accumulation toward the correct answer. While offering valuable insights, these approaches largely assume procedural, step-by-step execution and do not fully account for the self-corrective behaviors of modern reasoning models, particularly recovery after entering an incorrect trajectory. Once execution enters an erroneous path, reasoning may remain locally coherent yet globally incorrect, with the underlying error unrecognized.

To investigate this gap, we analyze the reasoning behavior of standard LLMs and large \textit{reasoning} models (LRMs). LLMs rely almost exclusively on \emph{\textbf{reactive correction}}, revising their trajectory only when an explicit contradiction or failed check surfaces. When errors remain latent, no trigger arises and the trace drifts into reasoning collapse, the dominant failure mode. LRMs additionally exhibit \emph{\textbf{proactive correction}}, questioning prior steps without any overt error and surfacing hidden mistakes that reactive mechanisms cannot reach (Figure \ref{fig:concept}). Although such proactive signals are imprecise and often second-guess already-correct chains, in this regime even a noisy signal is more useful than a precise one that never fires.

Building on this observation, we identify an additional informational axis distinct from procedural information. Our key idea is \emph{\textbf{epistemic uncertainty verbalization}}, the explicit externalization at the token level of a model's internal uncertainty about its reasoning state. Since autoregressive generation conditions only on preceding tokens, latent uncertainty is informationally inert, whereas verbalized uncertainty becomes actionable and supports downstream control actions such as self-correction. From this perspective, commonly discussed tokens such as "Wait" are surface manifestations of epistemic verbalization rather than the mechanism itself, and reasoning can be reframed as \emph{strategic information allocation under uncertainty}: the joint determination of how to advance the procedure and how much epistemic verbalization to allocate at each step, given the model's current uncertainty.

We empirically validate that epistemic verbalization carries substantive information for reasoning. On failed trajectories, injecting a minimal doubt cue such as "Wait, is that correct?", without identifying what went wrong, recovers around 15\% of incorrect rollouts, showing that the externalization of uncertainty itself supplies actionable signal. We further show that epistemic verbalization is a rapidly learnable capability. In line with the findings of \citet{limo}, as few as 800 training samples suffice to either instill or suppress this behavior, indicating that externalizing internal uncertainty requires no extraordinary ability but rather reflects a linguistic habit. Our new taxonomy and framing of LLM reasoning offers a unified lens for interpreting Aha moments, self-correction, and post-training phenomena, and points toward principled directions for designing and controlling reasoning models.

\section{Related Works} \label{section:related_work}

\paragraph{Understanding Aha moments.}
Recent studies have cast doubt on Aha-like phenomena in LLM reasoning. \citet{d2026illusion} show that markers such as ``Wait'' emerge from high-entropy prediction states and correlate only weakly with performance gains. \citet{maynot_aha} find that apparent self-reflection often degenerates into repetition rather than producing constructive revisions. A related line of evidence shows that LLMs correct externally provided errors yet fail to fix the same errors in their own outputs \citep{tsui2025self, cannot_correct, tyen2024llms, kamoi2024evaluating}. These studies document when Aha tokens fail to predict correction, but leave open why: the marker itself may be unreliable, the model may lack a corrective policy, or both. The underlying mechanism remains unidentified.

\paragraph{Theoretical understanding of reasoning.}
A parallel line of work analyzes reasoning trajectories at the aggregate level. \citet{yang2025decoupling} show that reasoning can both fix and introduce errors by decoupling knowledge-based responses from reasoning-based corrections. \citet{gan2025cot} recast CoT as optimization over reasoning states and identify trade-offs between noise reduction and generalization. Information-theoretic analyses show that CoT preserves task-relevant information and tightens error bounds \citep{understanding_cot, liang2025chain}. \citet{mi_peak} observe that effective reasoning concentrates information gain in a few critical steps, often marked by tokens like ``Wait.'' These accounts describe how information flows across a trajectory but treat every token as a procedural step, leaving unexplained how a model recovers after drifting onto an incorrect path without outside help.

We close both gaps with a unified information-theoretic framework of reasoning at the token level. First, we identify \emph{epistemic verbalization}, the externalization of uncertainty about one's own reasoning, as the informational source that lets a model regain traction after a procedural misstep, and prove that even sporadic occurrences suffice to restore convergence. Second, we separate this informational role from the \emph{control actions}, such as self-correction, that act on it. Together, these results reconcile the mixed empirical findings on Aha moments and supply the token-level grounding missing from aggregate information-theoretic accounts.

\section{A Self-Conditioning Framework for LLM Reasoning}
\label{sec:self_baysian_reasoning}
Our analysis mainly focuses on the \emph{closed-world inference setting}, where an LLM operates without external observations at inference time. Unlike tool-augmented agents that reduce uncertainty through environmental interaction, a closed-world LLM is constrained to a fixed parameterization $\theta$ and an initial input $x$, so all progress toward correct inference must come from internal belief transformation. We formalize this as \textbf{self-Bayesian reasoning}, where CoT~\citep{chain_of_thought} acts as \emph{self-conditioning} that reshapes the model's belief over the target variable $Y$ without new external observations. Appendix~\ref{appendix:world_bayesian} extends this to the open-world setting. 

\subsection{Reasoning as Self-Conditioning}
\label{sec:self_conditioning}
Given an input $x \in \mathcal{X}$, an LLM parameterized by $\theta$ induces a conditional distribution $P_\theta(\cdot \mid x)$ over tokens from vocabulary $\mathcal{V}$, and autoregressively generates a sequence $a_1, \dots, a_T$ until terminal step $T$. We define the reasoning state at step $t$ as
\begin{equation}
    s_0 := x, \qquad s_t := (x, a_1, \dots, a_t) \text{ for } t \geq 1,
\end{equation}
so that each token is drawn as $a_t \sim P_\theta(\cdot \mid s_{t-1})$. Each state $s_t$ induces a predictive distribution $P_\theta(Y \mid s_t)$ over the target variable $Y \in \mathcal{Y}$, and we take the objective of reasoning to be the minimization of $H(Y \mid s_T)$, where $H(\cdot)$ denotes Shannon entropy. Reasoning thus proceeds by \emph{self-conditioning}: each generated token refines the model's belief over $Y$ without introducing external evidence, and the information gain of a step is
\begin{equation}
    \Delta_t := H(Y \mid s_{t-1}) - H(Y \mid s_t).
\end{equation}
Self-generated intermediate steps serve as the model's sole source of evidence, making their informational quality critical for reasoning success.

\subsection{Procedural Reasoning and Its Collapse}
\label{sec:procedural_limit}
A dominant class of self-generated evidence in LLM reasoning consists
of \textbf{procedural information}: explicit step-by-step
computations, symbolic manipulations, variable instantiations, and
executions of learned subroutines~\citep{step_dpo, step_by_step_math,
raise, understanding_cot}. Formally, let
$0 = t_0 < t_1 < \cdots < t_K = T$ partition a reasoning trace into
sub-tasks with task-level state $U_k := (x, a_1, \dots, a_{t_k})$.
Procedural reasoning is then a sequence
$U_k = \Lambda_\theta(U_{k-1}, \tau_k)$, where $\Lambda_\theta$ is an
autoregressive execution operator implementing sub-task $\tau_k$.

\paragraph{Reasoning collapse in standard LLMs.}
We examine how this regime fails in practice by analyzing reasoning
traces from the Qwen2.5 and Qwen3-Base series (Qwen2.5-7B,
Qwen2.5-32B-Instruct, Qwen2.5-Math-7B~\citep{qwen2.5},
Qwen3-1.7B/8B/14B-Base~\citep{qwen3}) on AIME24/25, AMC23, and
MATH500. These models exhibit strong step-by-step procedural reasoning ability, achieve competitive performance on these benchmarks, and serve as standard
baselines in recent math reasoning research~\citep{limo, limr, 80_20, understanding_aha}. Using GPT-5 as an automated judge on 4,800 generated samples (600 problems × 8 samples), we quantify the prevalence of five recurring modes of reasoning collapse across the traces (Figure~\ref{fig:reasoning_collapse}): incoherence, hallucination spirals, repetition, topic drift, and degenerate loops. Collapse is
pervasive, occurring in 50--83\% of incorrect responses and rising
sharply with problem difficulty
(Figure~\ref{fig:reasoning_collapse}a). See
Appendix~\ref{appendix:reasoning_collapse} for details.

\begin{figure}[h!]
    \centering
    \includegraphics[width=\linewidth]{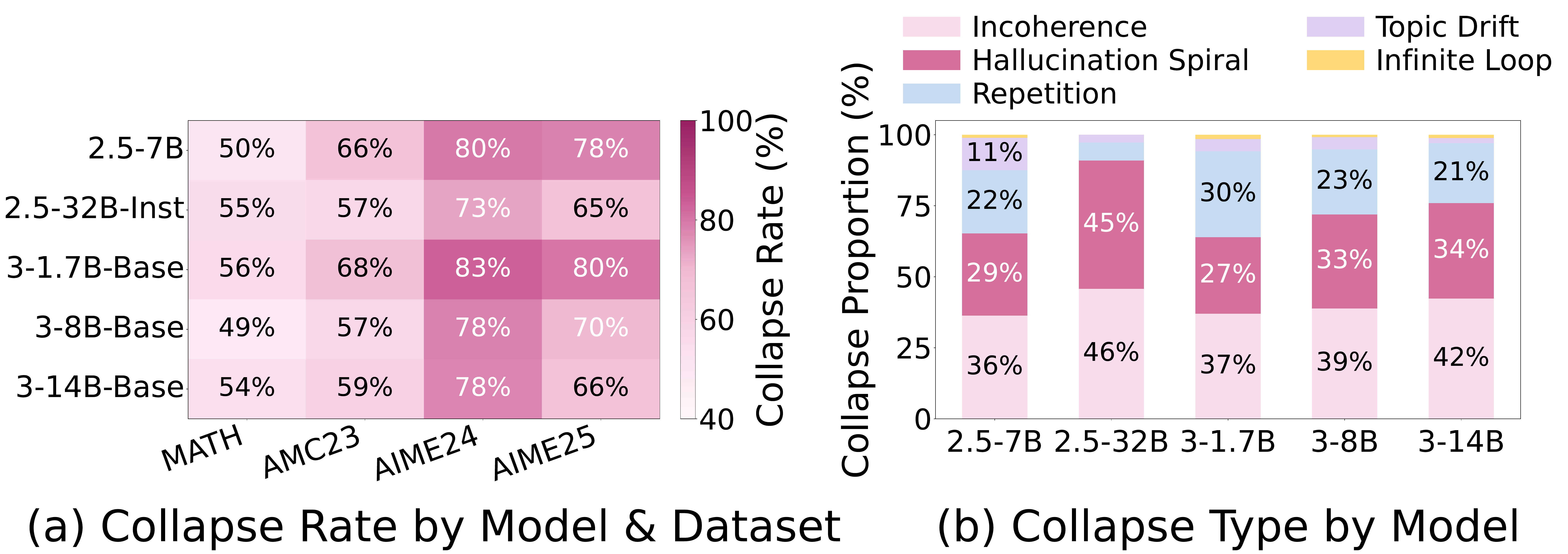}
   \caption{Reasoning collapse analysis across models and benchmarks. 
    (a) Collapse rate among incorrect responses per model and dataset. 
    (b) Breakdown of collapse types per model. Model names omit the ``Qwen'' prefix (e.g., 2.5-7B refers to Qwen2.5-7B).}
    \label{fig:reasoning_collapse}
\end{figure}

In each mode, the surface structure of step-by-step execution is preserved while $P_\theta(Y \mid s_t)$ drifts from the correct $Y$, often without overt error. This silent divergence is the dominant failure pattern, consistent with prior observations that procedural reasoning is brittle when a sub-task is \emph{unidentifiable}~\citep{understanding_cot} or silently misinstantiated through an early misjudgment.

\subsection{How Models Escape (or Fail to Escape) Collapse} \label{sec:collapse_escape}
We next ask how models escape, or fail to escape, these regimes. For
each self-correction event in the traces above, together with traces from
Large Reasoning Models (LRMs: DeepSeek-R1-Distill-Qwen-7B/32B
\citep{deepseek}, Qwen3-8B/14B \citep{qwen3}), we distinguish two
modes:
\begin{itemize}[itemsep=0pt, topsep=0pt, parsep=0pt, partopsep=0pt, leftmargin=*]
    \item \textit{\textbf{Reactive correction.}} The reasoning chain contains an
    explicit error, such as a contradiction, a failed check, or an
    invalid derivation, that the model recognizes and revises.
    \item \textit{\textbf{Proactive correction.}} No overt error has surfaced, yet the model questions a prior step and verifies or backtracks on internal doubt alone.
\end{itemize}

\begin{table}[h!]
\centering
\small
\setlength{\tabcolsep}{4pt}
\begin{tabular}{llcc}
\toprule
\small
Type & Model & Reactive & Proactive \\
 & & $n$ (\%) & $n$ (\%) \\
\midrule
\multirow{6}{*}{LLM}
  & Qwen3-1.7B-Base$^\dagger$   &  12 (100.0) &   0 (0.0)  \\
  & Qwen3-8B-Base$^\dagger$     &  19 (86.4)  &   3 (13.6) \\
  & Qwen3-14B-Base$^\dagger$    &  23 (92.0)  &   2 (8.0)  \\
  & Qwen2.5-7B$^\dagger$        &  11 (100.0) &   0 (0.0)  \\
  & Qwen2.5-Math-7B$^\dagger$   &  35 (79.5)  &   9 (20.5) \\
  & Qwen2.5-32B-Instruct$^\dagger$    &   8 (80.0)  &   2 (20.0) \\
\midrule
\multirow{4}{*}{LRM}
  & Qwen3-8B                    & 576 (76.3)  & 179 (23.7) \\
  & Qwen3-14B                   & 484 (77.8)  & 138 (22.2) \\
  & DeepSeek-R1-Distill-7B          & 475 (65.0)  & 256 (35.0) \\
  & DeepSeek-R1-Distill-32B         & 471 (69.3)  & 209 (30.7) \\
\bottomrule
\end{tabular}
\caption{Reactive and proactive self-correction counts across models. For each generation, a GPT-5-based filter flags traces with a \emph{wrong path $\rightarrow$ correction $\rightarrow$ correct answer} pattern; $n$
is the number of such traces per model. Proactive correction is
largely absent in LLMs and emerges consistently in LRMs.
$^\dagger$Small sample ($N < 50$). Details can be found in Appendix \ref{appendix:reactive_proactive_correction}.}
\label{tab:sc_breakdown}
\end{table}

Examples of both types of modes are provided in Appendix~\ref{appendix:example_two_correction_mode}. In LLMs, self-correction occurs in at most 35 of 4,800 generations (under 1\%) and is overwhelmingly reactive (Table~\ref{tab:sc_breakdown}). Reactive correction is the dominant escape mechanism, but it only works when errors surface as overt local mistakes. The collapse modes of
Figure~\ref{fig:reasoning_collapse} produce no such signal, so
reactive correction has nothing to trigger on, and the trace
silently diverges, ultimately leading to collapse.

\subsection{Proactive Correction in LRMs}
\label{sec:proactive}

LRMs exhibit a markedly different pattern. As shown in
Table~\ref{tab:sc_breakdown}, proactive corrections account for 22--35\%
of all self-corrections. These corrections arise without any overt error: the model pauses, questions a prior step, and either verifies or backtracks based on internal doubt alone, as shown in Appendix~\ref{appendix:example_two_correction_mode}.
Figure~\ref{fig:capability_map} situates this capability in the broader model landscape. Standard LLMs cluster along the bottom of the plot regardless of their direct fluency, whereas reasoning-tuned models occupy a distinct upper-right region where higher direct fluency is paired with a qualitatively new capacity for proactive correction.

\begin{figure}[h!]
    \centering
    \includegraphics[width=\linewidth]{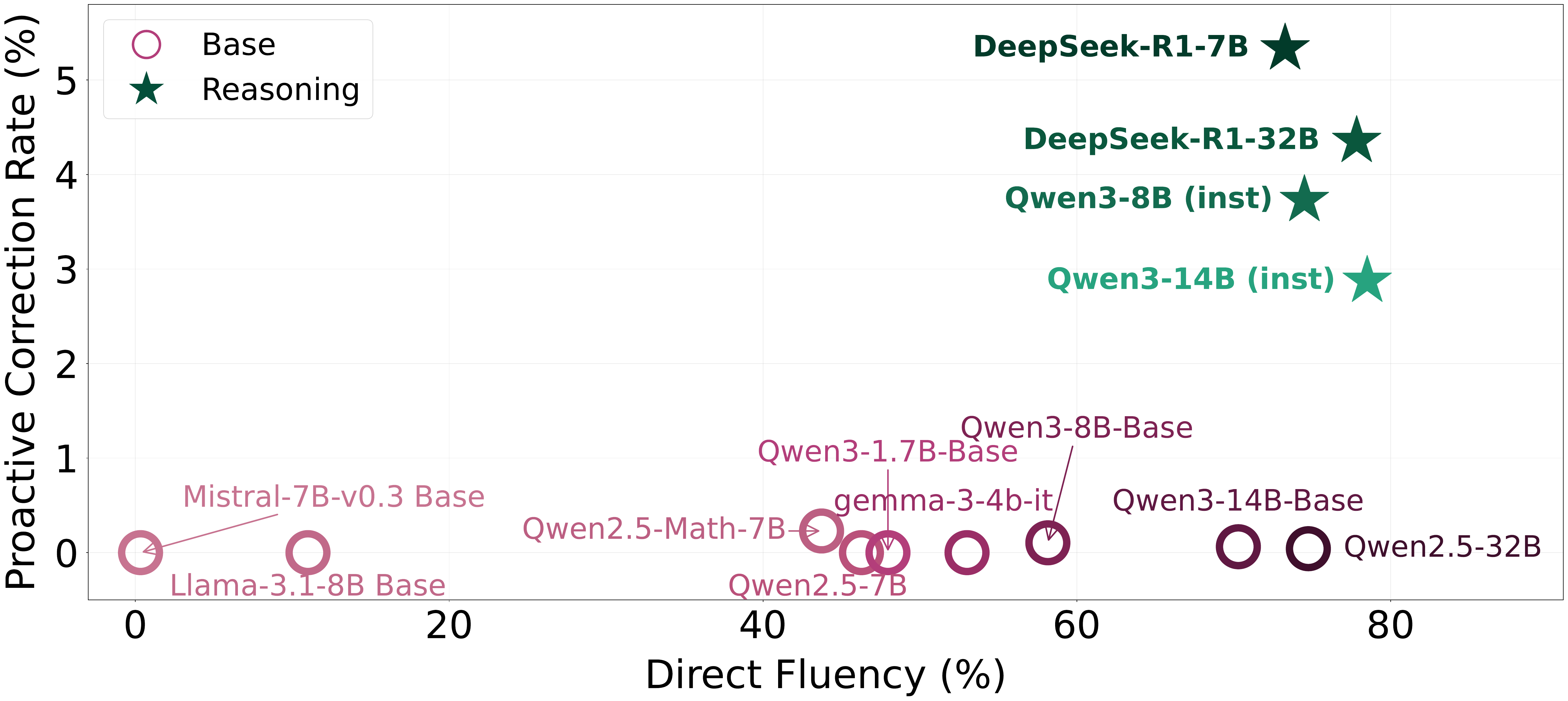}
    \caption{\textbf{Capability map across
model families.} The $x$-axis reports \emph{direct fluency}, the
share of responses answered correctly without invoking any
self-correction, computed as $(\text{correct}-\text{SC})/N$. The $y$-axis reports
the \emph{proactive correction rate}, the share of responses
containing a suspicion-based self-correction, i.e., a correction
triggered without explicit evidence of error.}
    \label{fig:capability_map}
\end{figure}

\paragraph{Precision of proactive signals.} Since proactive correction is preventive, the model may flag and re-check something that turns out to be fine. In other words, prevalence does not imply accuracy. Table~\ref{tab:proactive_precision} reports the \emph{precision} of proactive signals, defined as the fraction of suspicion-triggered corrections issued on a trajectory that was in fact wrong. Across LRMs, precision averages only $24.4\%$: roughly three out of four proactive verbalizations second-guess an
already-correct chain. DeepSeek-R1 distillations are better calibrated
($37.3\%$, $23.9\%$) than the Qwen3 series ($20.5\%$, $15.8\%$), and
precision does not improve with scale.

\begin{table}[h]
    \centering
    \small
    \begin{tabular}{p{0.64\columnwidth} >{\centering\arraybackslash}p{0.25\columnwidth}}
        \toprule
        Model & Precision (\%) \\
        \midrule
        DeepSeek-R1-Distill-Qwen-7B  & 37.3 \\
        DeepSeek-R1-Distill-Qwen-32B & 23.9 \\
        Qwen3-8B                     & 20.5 \\
        Qwen3-14B                    & 15.8 \\
        \bottomrule
    \end{tabular}
    \caption{Precision of proactive self-correction signals across LRMs. Precision is the fraction of suspicion-based corrections issued on genuinely incorrect trajectories, as judged by GPT-5 (See Appendix \ref{appendix:reactive_proactive_correction} for details). The remainder second-guess already-correct chains.}
    \label{tab:proactive_precision}
\end{table}

However, low precision does not undermine the role of proactive
correction. The silent divergences in
Figure~\ref{fig:reasoning_collapse} never produce an overt error, so
reactive correction has no trigger to act on. In this regime, even a
noisy extra signal is more useful than a precise reactive one
that never fires. Proactive correction is what brings these hidden
errors to the surface and lets the model recover, allowing strong
reasoners to maintain trajectory quality where reactive correction
cannot reach. We now term this mechanism \emph{\textbf{epistemic verbalization}}.

\section{Epistemic Verbalization}
\label{sec:epistemic_verbalization}

\paragraph{Definition.}

Epistemic verbalization is the linguistic channel through which the model explicitly externalizes uncertainty about its own trajectory, turning otherwise inert internal assessments into conditionable tokens that subsequent reasoning can act on. Reasoning traces of strong models contain utterances that do not advance procedural execution but instead express judgments of uncertainty or doubt about intermediate steps (Table~\ref{table:ev_examples}).

\begin{table}[h!]
    \centering
    \small
    \begin{tabularx}{\columnwidth}{c X}
        \toprule
        \# & Example \\
        \midrule
        1 & \small{Wait, is that correct?} \\
        2 & \small{Wait, 2023 is 7 multiplied by 17 squared, right?} \\
        3 & \small{Wait, maybe $f(n)$ is related to the Möbius function but scaled differently.} \\
        4 & \small{Wait, perhaps I can write it as $(f * (n/d))(n)=1$, but that doesn't seem helpful.} \\
        \bottomrule
    \end{tabularx}
    \caption{Examples of epistemic verbalizations co-occurring with ``Wait'' in reasoning traces from DeepSeek-R1-Distill-Qwen-1.5B/7B/32B.}
    \label{table:ev_examples}
\end{table}

Let $Z_t$ denote a latent epistemic variable at step $t$, representing 
the model's internal assessment of its trajectory's reliability. As a 
latent quantity, $Z_t$ is informationally inert: although 
$I(Y; Z_t \mid s_{t-1})$ may be strictly positive, this information 
remains trapped inside the model's internal states. Since autoregressive 
decoding conditions on the realized token sequence $s_{t-1}$, not 
on internal states, $Z_t$ cannot influence subsequent generation. 
Epistemic verbalization closes this gap by emitting tokens $v_t$ from 
which $Z_t$ is recoverable. Once $v_t \in s_t$, we have
\begin{equation}
    H(Y \mid s_{t-1}, v_t) \leq H(Y \mid s_{t-1}),
\end{equation}
with strict inequality whenever $v_t$ carries non-trivial information 
about $Z_t$. Epistemic verbalization is thus the mechanism by which 
latent assessments become causally active in reasoning.

\subsection{Injecting Epistemic Verbalization Recovers Failed Trajectories.}  \label{sec:injecting_epistemic}

We empirically validate that epistemic verbalization supplies actionable information beyond procedural reasoning through an intervention study on failed reasoning trajectories.

We collect all incorrect rollouts from Qwen3-8B/14B-Base, which predominantly exhibit procedural reasoning (Section~\ref{sec:procedural_limit}), on three benchmarks (AIME24, AMC23, MATH500) with 8 rollouts per problem. Since every rollout is incorrect, it must diverge from a valid path at some unknown point. We therefore sweep candidate truncation points: for each rollout, we truncate at relative position $\alpha \in \{0.2, 0.5, 0.8, 0.9\}$ and resume generation from that point, optionally inserting a short epistemic verbalization.

\begin{table}[h!]
\small
\centering
\begin{tabular}{p{0.3\columnwidth} p{0.59\columnwidth}}
\toprule
Condition & Phrase \\
\midrule
\textsc{None}              & (no injection) \\
\midrule
\textsc{Hmm-Doubt}         & ``Hmm, I'm not sure this is right.'' \\
\textsc{Hmm-Doubt-Revisit} & ``Hmm, I'm not sure this is right. Let me reconsider.'' \\
\midrule
\textsc{Wait-Doubt}        & ``Wait, is that correct?'' \\
\textsc{Wait-Revisit}      & ``Wait, let me double-check.'' \\
\bottomrule
\end{tabular}
\caption{Injection conditions. Four epistemic phrases span two phrase 
types (\textsc{Hmm}, \textsc{Wait}) and vary in whether they express 
doubt alone, signal an intent to revisit, or both. \textsc{None} 
controls for the effect of mere re-sampling from the truncated prefix.}
\label{tab:injection_conditions}
\end{table}

We compare five conditions (Table~\ref{tab:injection_conditions}). The \textsc{None} baseline performs no injection. The other four are epistemic phrases that vary along two axes. The first axis is phrase type (\textsc{Hmm} vs.\ \textsc{Wait}), and the second is function. \textsc{Doubt} expresses uncertainty alone, \textsc{Revisit} signals an intent to re-examine the reasoning, and \textsc{Doubt-Revisit} combines the two. Because it is unknown whether the truncation point occurs before or after the divergence point, the \textsc{None} condition isolates the effect of re-sampling alone. For each condition we
draw $n=4$ continuations and report the \emph{recovery rate}: the
fraction of originally incorrect rollouts whose continuation reaches the
correct answer.

\begin{figure}[h!]
    \centering
    \includegraphics[width=\linewidth]{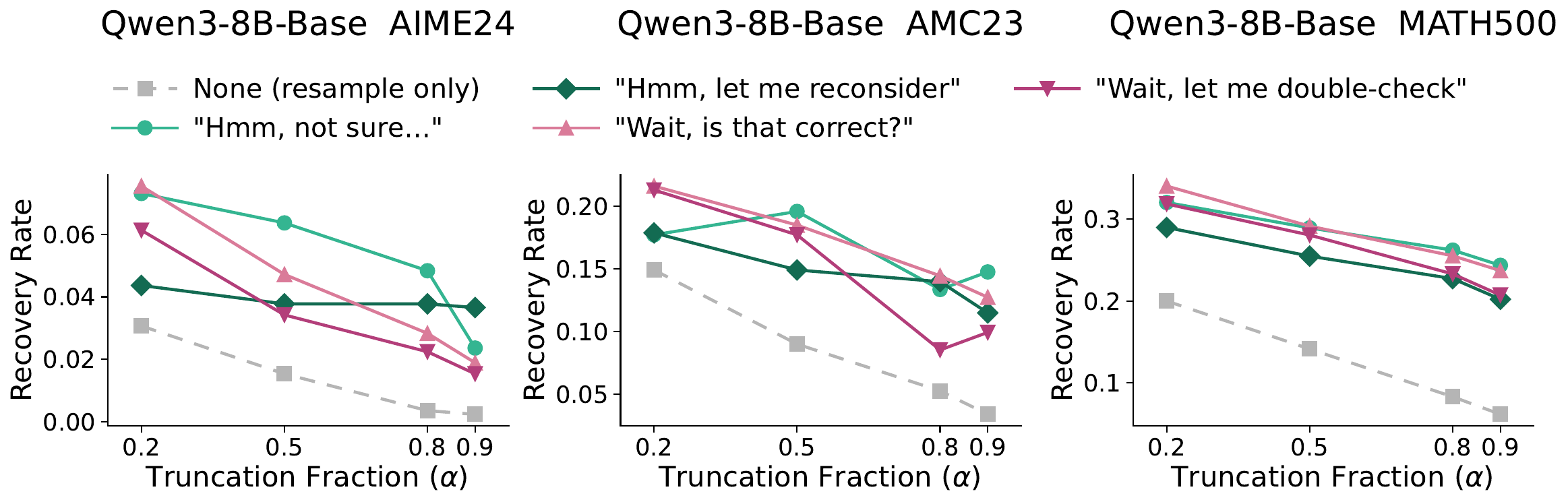}\\[1em]
    \includegraphics[width=\linewidth]{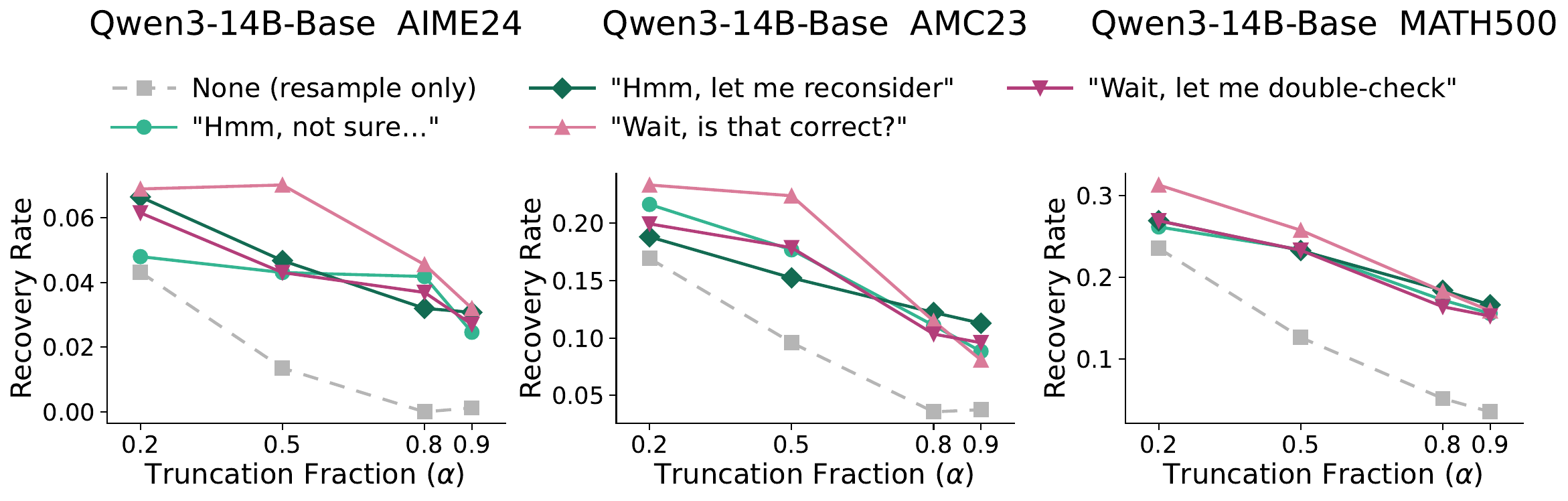}
    \caption{Recovery rate vs.\ truncation fraction $\alpha$ across injection
conditions and benchmarks, for Qwen3-8B-Base (top) and Qwen3-14B-Base (bottom).}
    \label{fig:recovery_rate}
\end{figure}

Figure~\ref{fig:recovery_rate} reports recovery rates across conditions, truncation positions, and benchmarks. Across all settings, simply expressing uncertainty, without pinpointing what went wrong, acts as an informative signal that drives recovery. Notably, even \textsc{Doubt}, which conveys only uncertainty without any explicit cue to re-examine the reasoning (e.g., "Hmm, I'm not sure ..."), suffices for the model to detect its own errors and recover the correct answer. Recovery rates decline as $\alpha$ increases, but the baseline decays fastest, dropping to near zero at $\alpha=0.9$, so the gap between injected cues and the baseline widens with $\alpha$. Differences among phrasings are small, though "Wait"-style cues slightly outperform "Hmm"-style ones. Overall recovery tracks benchmark difficulty (MATH500 $>$ AMC23 $>$ AIME24), and the same pattern holds for both Qwen3-8B-Base and Qwen3-14B-Base, indicating robustness to model scale.

\subsection{Epistemic Tokens as Measurement Proxies} \label{sec:epistemic_tokens}
The linguistic realizations of uncertainty expression can be numerous and diverse, making full semantic analysis infeasible at scale. Still, surface tokens provide a useful observational handle on where verbalization is likely occurring.

To identify which surface tokens reliably co-occur with epistemic verbalization, we analyzed reasoning traces from four models (Qwen3-8/14B and DeepSeek-R1-Distill-Qwen-7B/32B) on math benchmarks and measured how frequently each candidate token appeared using GPT-5 as a judge. Details are provided in Appendix~\ref{appendix:identifying_epistemic_tokens}. The resulting frequencies were: "wait" (73.0\%), "maybe" (32.9\%), "actually" (12.4\%), "check" (10.5\%), "hmm" (8.3\%), "perhaps" (8.2\%), "might" (6.6\%), "seems" (3.3\%), and "alternatively" (1.2\%). 

We therefore adopt these nine terms as \textbf{epistemic tokens}: \textit{"wait"}, \textit{"hmm"}, \textit{"perhaps"}, \textit{"maybe"}, \textit{"actually"}, \textit{"alternatively"}, \textit{"seems"}, \textit{"might"}, and \textit{"check"} as practical indicators of regions where epistemic verbalization likely occurs. 

\section{A Unified Framework: Reasoning as Strategic Information Allocation}
\label{sec:theory}
We now formalize the analysis of Sections \ref{sec:self_baysian_reasoning}–\ref{sec:epistemic_verbalization} within a unified information-theoretic framework: reasoning under uncertainty operates along two informational axes, procedural advancement and epistemic verbalization, on which control actions such as self-correction subsequently act.

\subsection{Epistemic Verbalization Enables Continued Information Acquisition}
\label{subsec:epistemic}
Procedural reasoning operates by proceeding along the correct path or revising an incorrect one when explicit errors are detected. Epistemic verbalization, by contrast, exposes information about the reliability 
of the current trajectory regardless of whether any error has surfaced, thereby 
altering the model's effective belief state. We formalize this distinction by 
specifying when procedural reasoning alone becomes insufficient and showing 
that even sporadic epistemic verbalization suffices to restore convergence.
\begin{assumption}[Procedural Information is Trigger-Conditional]
\label{assump:proc_trigger}
Suppose the procedural trajectory enters a diverged execution path at sub-task index $k^*$. For $k > k^*$, let $E_k \in \{0,1\}$ indicate whether sub-task $\tau_k$ surfaces an overt error available to procedural revision, and let
$p_E := \mathbb{P}(E_k = 1 \mid U_{k-1})$.
There exists $\delta_{\mathrm{proc}} > 0$ such that
\[
I(Y; U_k \mid U_{k-1}) \;\ge\; \delta_{\mathrm{proc}} 
\quad \text{only if } E_k = 1.
\]
\end{assumption}
Procedural information about $Y$ thus accumulates only when overt errors surface along the diverged path. In the silent-divergence regime characterized in 
Section~\ref{sec:procedural_limit}, the surface structure of step-by-step 
execution is preserved even as $P_\theta(Y \mid U_k)$ drifts from the correct 
$Y$, so $p_E \approx 0$ and procedural information acquisition stalls in 
expectation.

\begin{assumption}[Epistemic Informativeness]
\label{assump:epi_info}
There exist $\delta > 0$ and a threshold $\eta > 0$ such that, at every step
$t$ where an epistemic token is generated with $H(Y \mid \tilde S_{t-1}) > \eta$,
\[
H(Y \mid \tilde S_{t-1}) - H(Y \mid \tilde S_t) \;\ge\; \delta,
\]
independently of whether an overt error has surfaced at the current sub-task.
\end{assumption}
Intuitively, whenever uncertainty about $Y$ is still substantial, generating an epistemic token yields a non-trivial reduction in that uncertainty, regardless of whether the procedural channel has flagged an error. The key property is thus \emph{trigger-independence}: epistemic verbalization can introduce information gain even when $p_E \to 0$. This is consistent with the intervention 
study in Section~\ref{sec:injecting_epistemic}, where injecting a doubt cue alone, without pinpointing any error, meaningfully enabled the recovery of failed trajectories. In addition, we empirically demonstrate in Appendix~\ref{appendix:epistemic_verbalization_informativeness} that epistemic verbalization introduces information by analyzing the mutual information between epistemic verbalization and $Y$. 

\begin{proposition}[Sporadic Epistemic Verbalization Suffices for Convergence]
\label{prop:epi_enables_sparse}
Under Assumption~\ref{assump:epi_info}, if an epistemic token is generated with at least some non-zero probability $\rho$ whenever $H(Y \mid \tilde S_t) > \eta$, then $\mathbb{E}[H(Y \mid \tilde{S}_t)] \to 0$ as $t \to \infty$, regardless of the procedural trigger probability $p_E$.
\end{proposition}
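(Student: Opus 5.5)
Write $h_t := H(Y \mid \tilde S_t = \tilde s_t)$ for the realized posterior entropy, and $\bar h_t := \mathbb{E}[h_t] = H(Y \mid \tilde S_t)$ for its expectation over trajectories. The plan is a supermartingale-style potential argument with $\bar h_t$ as the potential. The procedural channel is used only to certify that it never \emph{increases} expected uncertainty. This is why the conclusion is independent of $p_E$: Assumption~\ref{assump:proc_trigger} is never needed for progress. Progress is instead extracted entirely from Assumption~\ref{assump:epi_info} together with the firing probability $\rho$.

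\emph{Step 1 (monotonicity).} Since $\tilde S_t$ refines $\tilde S_{t-1}$ (it contains it as a prefix), the chain rule gives $\bar h_{t-1} - \bar h_t = I(Y; a_t \mid \tilde S_{t-1}) \ge 0$. This holds for any token, procedural or epistemic, whether or not $E_k=1$. Hence $\bar h_t$ is nonincreasing and bounded below by $0$.

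\emph{Step 2 (one-step drift).} I condition on $\tilde S_{t-1}$. On the event $A_{t-1} := \{h_{t-1} > \eta\}$, an epistemic token is emitted with probability at least $\rho$, and in that case the realized drop is at least $\delta$ by Assumption~\ref{assump:epi_info}. In all other cases the conditional expected drop $\mathbb{E}[h_{t-1} - h_t \mid \tilde S_{t-1}]$ is nonnegative, since it is a conditional mutual information. Taking expectations gives
\[
\bar h_{t-1} - \bar h_t \;\ge\; \rho\,\delta\,\mathbb{P}(A_{t-1}).
\]
Here one must be careful. Assumption~\ref{assump:epi_info} bounds the drop pathwise given that the token is epistemic. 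The complementary branch contributes a drop that is nonnegative in conditional expectation but not pathwise. So the split is done inside $\mathbb{E}[\cdot \mid \tilde S_{t-1}]$, as the sum of the epistemic branch and the non-epistemic branch, each weighted by its probability.

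\emph{Step 3 (summability and conclusion).} Telescoping gives $\rho\delta\sum_{t\ge1}\mathbb{P}(A_{t-1}) \le \bar h_0 \le \log|\mathcal{Y}|$. Hence $\mathbb{P}(h_t > \eta) \to 0$; by Borel--Cantelli, $h_t > \eta$ in fact occurs only finitely often almost surely. Then
\[
\bar h_t \;\le\; \eta + \log|\mathcal{Y}|\cdot\mathbb{P}(h_t>\eta),
\]
so $\limsup_t \bar h_t \le \eta$. The constant $p_E$ appears nowhere in this argument.

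\emph{Main obstacle.} The statement claims $\mathbb{E}[H(Y\mid\tilde S_t)]\to 0$, but a fixed threshold $\eta$ only yields convergence to within $\eta$. Below $\eta$, Assumption~\ref{assump:epi_info} is silent, and Step 1 only prevents increase. I would close this gap by reading Assumption~\ref{assump:epi_info} as holding for every $\eta>0$, with some $\delta(\eta)>0$ and $\rho(\eta)>0$. Applying Steps 2--3 for each $\eta$ then gives $\limsup_t \bar h_t \le \eta$ for all $\eta>0$, hence $\bar h_t \to 0$. Alternatively, I would state the result as convergence into the $\eta$-neighbourhood and note that ``$\to 0$'' is the $\eta\downarrow 0$ limit. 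A secondary subtlety I would flag is the finiteness of $\mathcal{Y}$, which is needed for the bound $\bar h_0\le\log|\mathcal{Y}|$ and for bounding $h_t$ on $A_t$. For countable $\mathcal{Y}$, I would replace this with the assumption $H(Y\mid x)<\infty$ and a uniform-integrability argument in the final step.
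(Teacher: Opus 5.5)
Your Step 2 has a genuine gap. You rightly decline to assume that the realized entropy $h_t$ is pathwise nonincreasing. But your substitute claim, that the non-epistemic branch contributes a drop that is nonnegative in conditional expectation, is false. Let $V_t$ indicate that $a_t$ is epistemic. Only the \emph{full} average $\mathbb{E}[h_{t-1}-h_t \mid \tilde S_{t-1}] = I(Y; a_t \mid \tilde S_{t-1}=\tilde s_{t-1})$ is a mutual information. The partial average $\mathbb{E}[(h_{t-1}-h_t)\mathbf{1}\{V_t=0\} \mid \tilde S_{t-1}]$ is not, and it can be negative. For a concrete case, take binary $Y$ with $P(Y=1 \mid \tilde s_{t-1}) = 0.9$, so $h_{t-1} \approx 0.469$ bits, and say $\eta = 0.1$. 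With probability $\rho = 1/2$ an epistemic token is emitted whose posterior is $P(Y=1)=1$, a drop of $0.469 =: \delta$. With probability $1/2$ a non-epistemic token is emitted whose posterior is $P(Y=1)=0.8$; the two posteriors average back to $0.9$, as they must. The non-epistemic drop is then $\approx -0.253$, so the total expected drop is $\approx 0.108 < \rho\delta \approx 0.235$. All of your hypotheses hold, yet $\bar h_{t-1}-\bar h_t \ge \rho\delta\,\mathbb{P}(A_{t-1})$ fails. Doing the split inside $\mathbb{E}[\cdot \mid \tilde S_{t-1}]$ does not rescue it, because that weighted split is exactly the one that breaks.

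The paper gets through this step only by positing pathwise monotonicity. It sets $\Delta_t := H_{t-1}-H_t \ge 0$, so the non-epistemic branch contributes $\ge 0$ pathwise and $\mathbb{E}[\Delta_t \mid \tilde S_{t-1}] \ge \delta\,\mathbb{P}(V_t=1 \mid \tilde S_{t-1}) \ge \rho\delta$ on $\{H_{t-1}>\eta\}$. It then telescopes to the hitting time $\tau_\eta$.

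You have two ways to repair Step 2. You can adopt that assumption explicitly. Alternatively, you can keep your weaker setting and change the decomposition:
\[
\mathbb{E}[h_{t-1}-h_t \mid \tilde s_{t-1}] = \sum_a P(a \mid \tilde s_{t-1})\, D(\pi_a \,\|\, \pi),
\]
where $\pi$ and $\pi_a$ are the posteriors before and after token $a$. Every term here is nonnegative, so you may discard the non-epistemic ones. For an epistemic $a$, the condition $H(\pi)-H(\pi_a) \ge \delta$ forces $\mathrm{TV}(\pi_a,\pi) \ge g(\delta) > 0$, by uniform continuity of entropy on the finite simplex. Pinsker's inequality (in nats) then gives $D(\pi_a\,\|\,\pi) \ge 2g(\delta)^2$. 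This yields a drift of $\rho\, c(\delta,|\mathcal{Y}|)$ in place of $\rho\delta$, and your Step 3 then goes through unchanged.

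Step 3 itself, and your reading of $\eta$ as arbitrary with $(\rho(\eta),\delta(\eta))$, match the paper. Your occupation-time bound $\sum_t \mathbb{P}(h_t>\eta) \le \bar h_0/(\rho c)$ is in fact somewhat more robust than the paper's hitting-time argument, since it does not need $h_t$ to stay below $\eta$ after first reaching it. It also avoids the paper's step $\mathbb{E}[H_0]-\eta \ge \mathbb{E}[H_0]-\mathbb{E}[H_{\tau_\eta}]$, which runs the wrong way because $H_{\tau_\eta} \le \eta$.
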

The formal statement and proof are deferred to Appendix~\ref{appendix:proof_convergence}. This explains the empirical pattern in Section~\ref{sec:self_baysian_reasoning}. LLMs rely on reactive correction and thus fail in silent-divergence regimes where no procedural trigger arises, whereas LRMs sustain a non-trivial rate of epistemic verbalization that keeps belief contracting in the same regimes.

\subsection{Self-Correction as a Control Action}
\label{subsec:self_correction}
Building on the distinction between procedural and epistemic information, we now separate \emph{information} from \emph{control}. Self-correction is a control action that fires whenever an error surfaces in the reasoning chain. Epistemic verbalization does not itself perform correction. It is a conditionable informational mechanism that renders the model's uncertainty at the token level, enabling the policy to control the reasoning trajectory accordingly.

Taken together, these results characterize reasoning as \emph{\textbf{strategic information allocation under uncertainty}}: a process in which an LLM balances procedural and epistemic information acquisition and then performs appropriate control actions based on that information.

\section{Experiments}

\subsection{Test-Time Control of Epistemic Tokens} \label{section:test_time_control}

Following the previous finding that injecting epistemic verbalization meaningfully recovers failed trajectories (Section~\ref{sec:injecting_epistemic}), we conversely manipulate the epistemic tokens defined in Section~\ref{sec:epistemic_tokens} at test time to further analyze their impact on reasoning performance. Specifically, we use DeepSeek-R1-Distill-Qwen-14B/32B~\citep{deepseek} and compare standard inference against inference in which epistemic tokens are suppressed.

\begin{figure}[ht!]
    \centering
    \includegraphics[width=\linewidth]{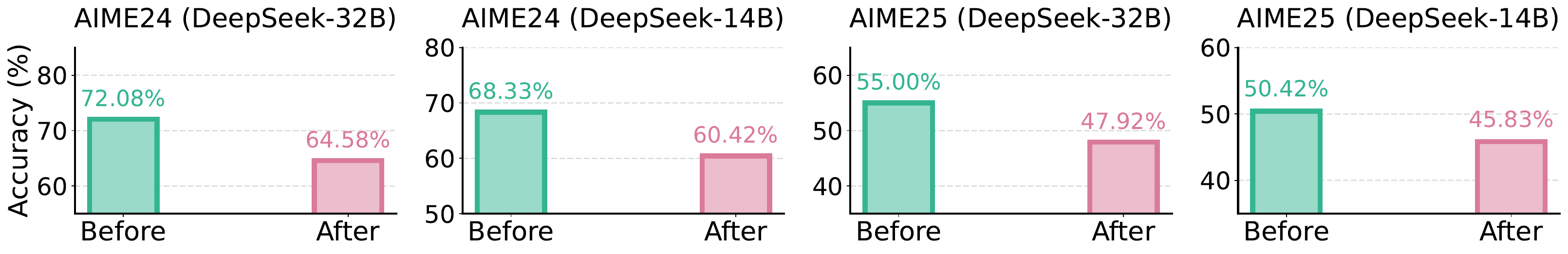}
    \caption{Comparison of avg@8 between baseline inference and inference with epistemic tokens suppressed on DeepSeek-R1-Distill-Qwen-14B/32B.}
    \label{fig:test_time}
    \vspace{-0.4cm}
\end{figure}

As shown in Figure~\ref{fig:test_time}, masking epistemic tokens  results in performance drops around 10\%. While suppression substantially disrupts uncertainty expression, performance does not collapse entirely. Analysis of the reasoning traces (Appendix~\ref{appendix:bypassing_suppression}) shows that the models sometimes route around the masked tokens by adopting alternative forms of epistemic verbalization.

\subsection{Suppressing Epistemic Verbalization via Distillation} \label{sec:ev_suppression}

To rule out such circumvention and isolate the effect of epistemic verbalization more cleanly, we design a controlled self-distillation SFT experiment. For each base model, we fine-tune it on 800 of its own reasoning traces generated with epistemic verbalization suppressed. Specifically, we append the instruction \emph{"Proceed directly without expressing uncertainty or doubt within the reasoning steps."} to the prompt, collect 800 problems the model answers correctly under this prompt, and use the resulting traces as SFT data. This yields a fine-tuned variant whose training traces differ from the base model's natural reasoning only in the absence of epistemic verbalization.

\newcolumntype{C}[1]{>{\centering\arraybackslash}m{#1}}
\begin{table}[h!]
    \centering
    \small
    \renewcommand{\arraystretch}{1.3}
    \begin{tabular}{p{3.5cm}|C{0.9cm}>{\columncolor[HTML]{FFF5F8}}C{2cm}}
    \hline
        & \textbf{Base} & \textbf{SFT (no EV)} \\
        \hline
        Qwen2.5-7B       & 13.3 & 6.7 \\
        Qwen3-8B-Base    & 16.7 & 3.3 \\
        Qwen3-14B-Base   & 16.7 & 10.0 \\
        DeepSeek-R1-Distill-7B      & 50.0  & 30.0 \\
        DeepSeek-R1-Distill-32B     & 80.0  & 43.3 \\
    \hline
    \end{tabular}
     \caption{AIME24 pass@1 across base models and SFT variants trained on traces with epistemic verbalization (EV) suppressed.}
    \label{table:ev_suppression}
\end{table}

As shown in Table~\ref{table:ev_suppression}, training on traces with epistemic verbalization suppressed consistently degrades performance, cutting accuracy by more than half in some cases despite the fact that all training traces contain correct answers. To better understand this degradation, we revisit the per-model capability map in Figure~\ref{fig:capability_map} that plots direct fluency against proactive correction rate. There, LRMs exhibit direct fluency comparable to that of LLMs but substantially higher proactive correction rates. Interestingly, suppressing epistemic verbalization through fine-tuning lowers their proactive correction rate, projecting the LRMs onto the LLM region of the map and bringing both their reasoning performance and behavior in line with those of LLMs. This further indicates that epistemic verbalization is not mere verbosity or stylistic decoration, but a substantive axis of reasoning capability.

\subsection{Impact of Epistemic Verbalization on Reasoning Distillation} \label{sec:limo_distillation}

We further examine the effects of distillation on a public dataset rich in epistemic verbalizations across several models, and find that epistemic verbalization is rapidly distilled and translates into quick performance gains when certain conditions are met, where the degree of distributional alignment on epistemic tokens plays a critical role. We use the LIMO-v2 dataset~\citep{limo}, which was gathered from DeepSeek R1, DeepSeek-R1-Distill-Qwen-32B \citep{deepseek}, and QwQ-32B \citep{qwen3}, and, as shown in Figure~\ref{fig:limo_epistemic}, contains a substantial number of epistemic tokens. Like our SFT setup in Section~\ref{sec:ev_suppression}, LIMO consists of only 800 examples, making it far too small to instill mathematical knowledge and instead serving primarily to reshape the model's linguistic habits.

\begin{figure}[h!]
    \centering
    \includegraphics[width=\linewidth]{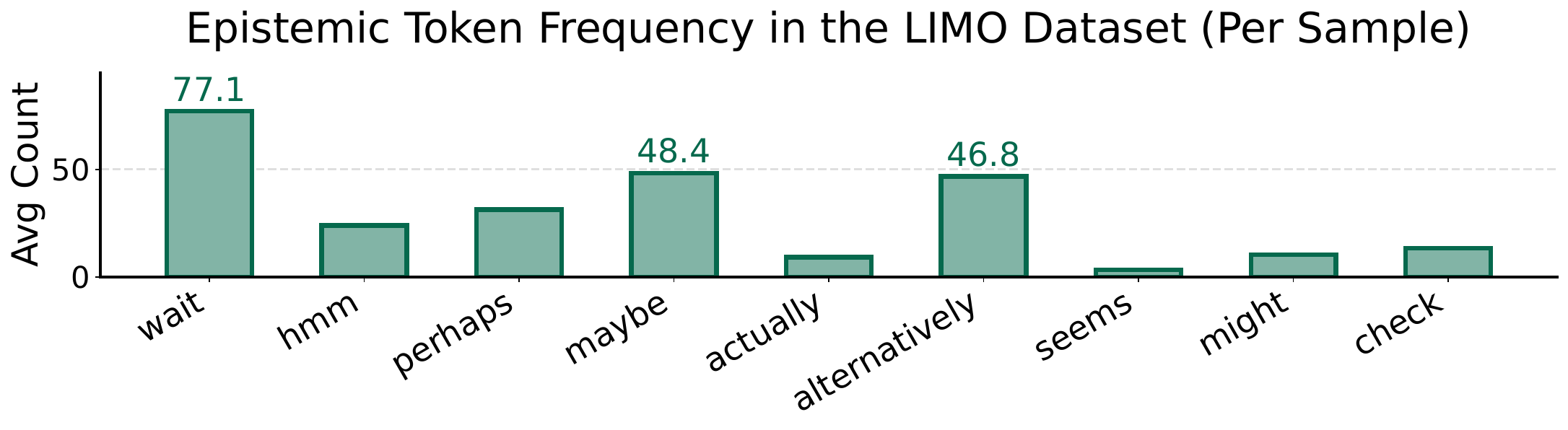}
    \caption{Per-sample epistemic token counts in LIMO. "Wait" alone appears 77 times per response on average.}
    \label{fig:limo_epistemic}
\end{figure}

Figure~\ref{fig:limo_aime} shows the AIME24 pass@1 scores of various models trained on the same LIMO dataset. For Qwen2.5-7B and Qwen3-8B/14B-Base, performance rapidly improves by up to 2.6x with only 800 samples, indicating that epistemic verbalization is not a special model capability but rather a linguistic habit that is easily distilled. In contrast, other models exhibit substantial performance degradation despite being trained on the same dataset with similar initial accuracy and model size. 

\begin{figure}[h!]
    \centering
    \includegraphics[width=0.32\linewidth]{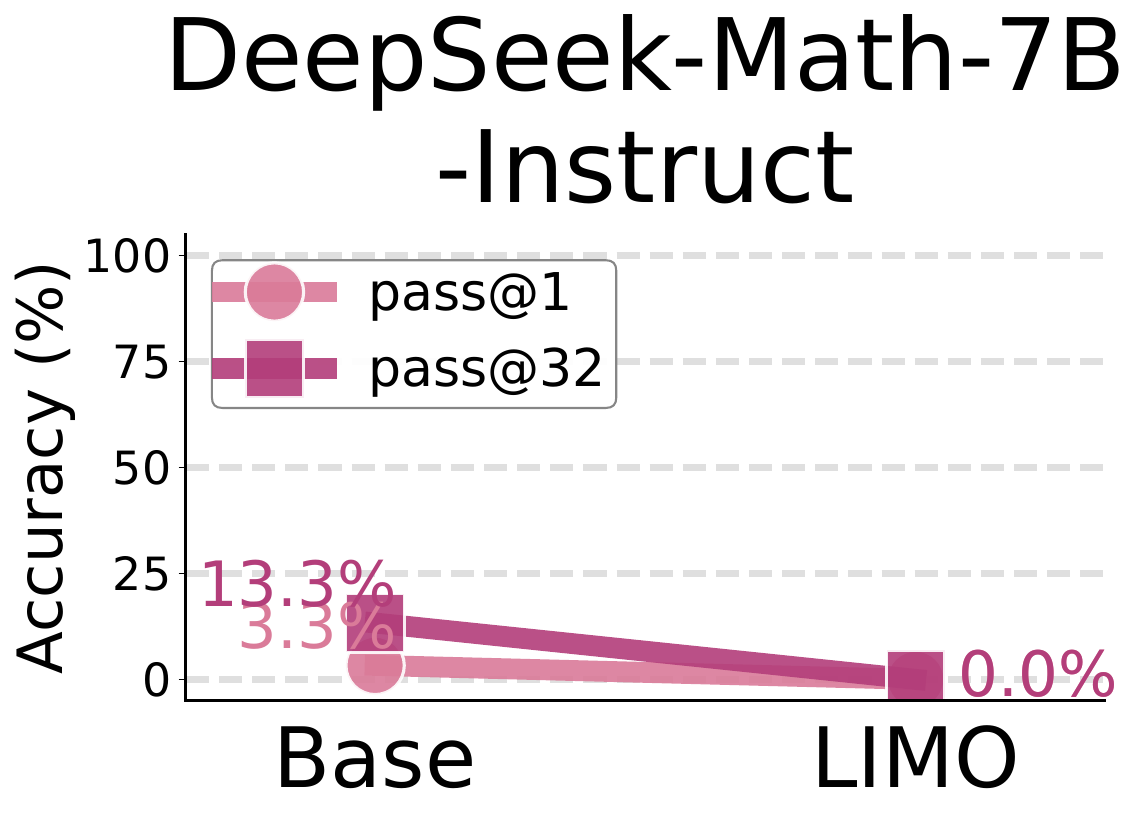}
    \includegraphics[width=0.32\linewidth]{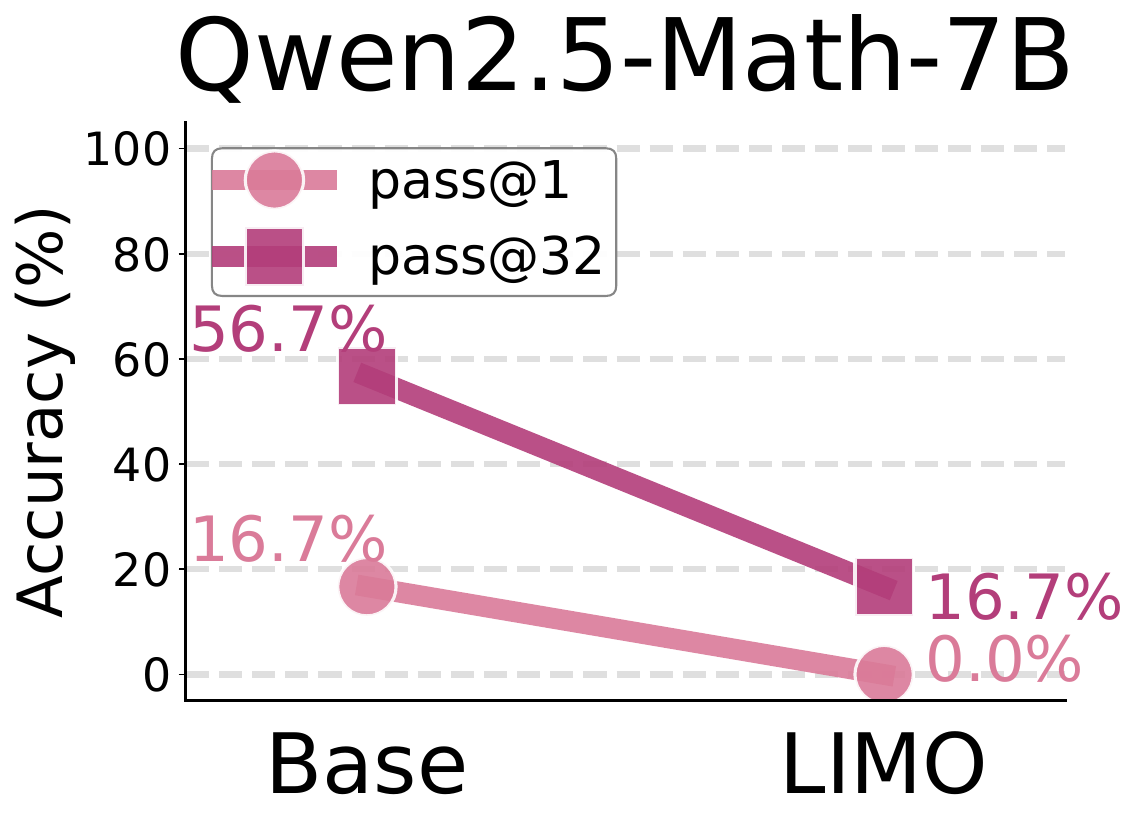}
    \includegraphics[width=0.32\linewidth]{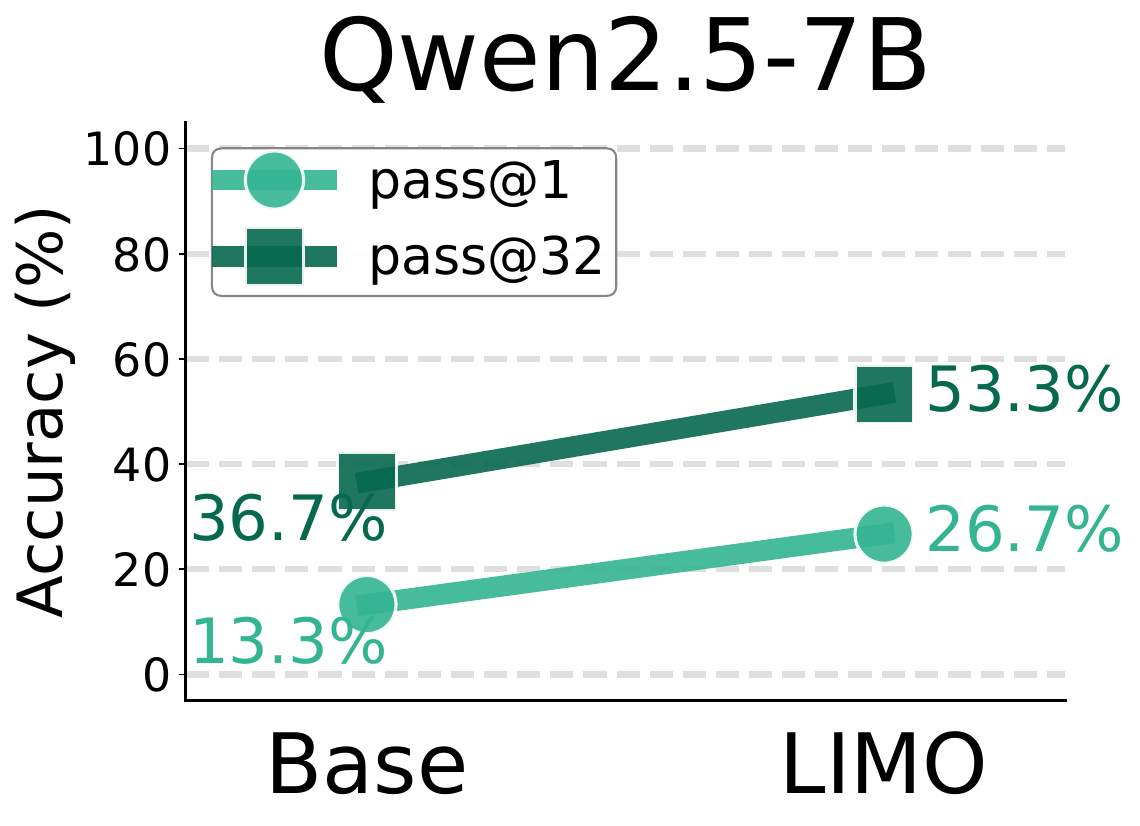} \\[10pt]
    \includegraphics[width=0.32\linewidth]{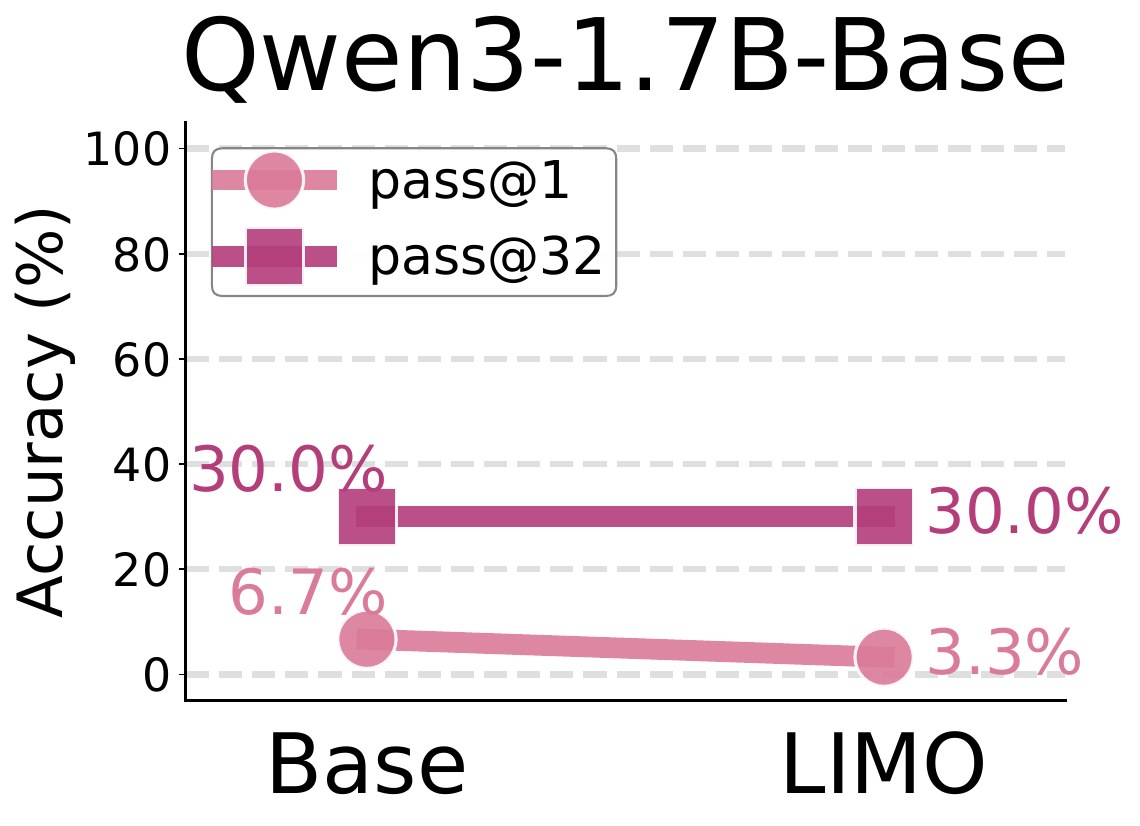}
    \includegraphics[width=0.32\linewidth]{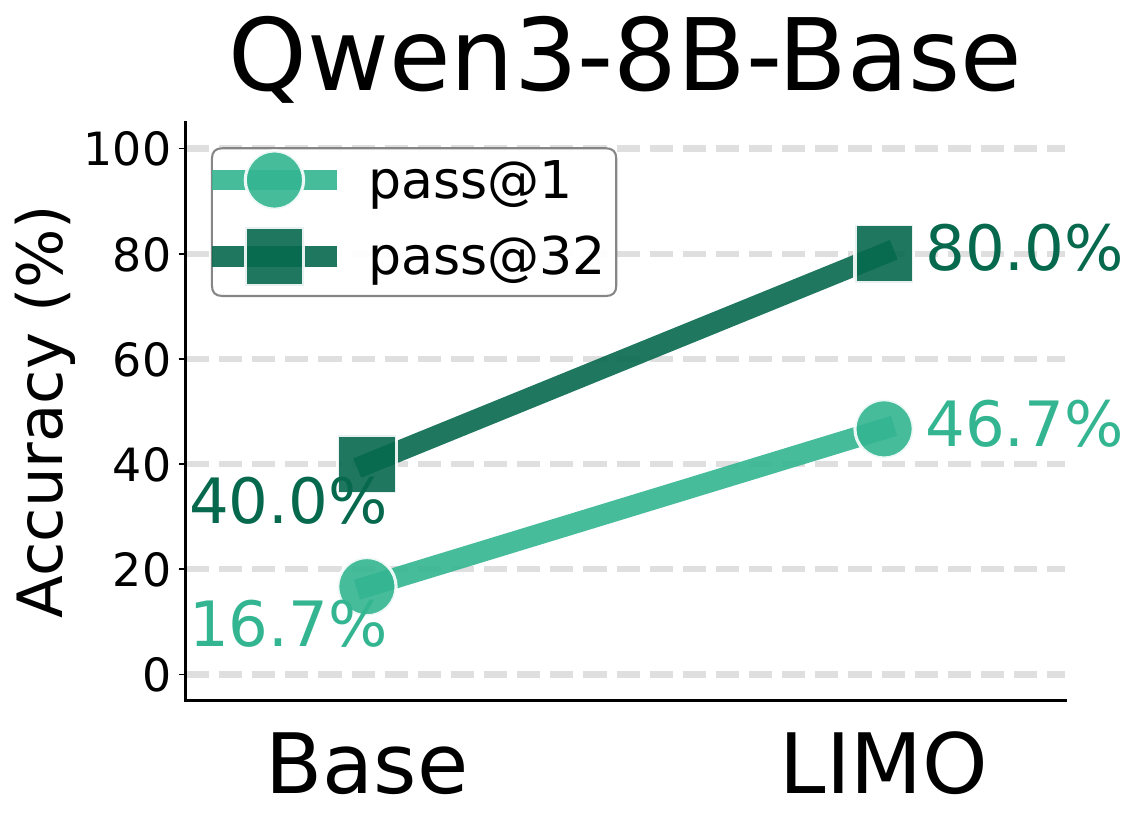}
    \includegraphics[width=0.32\linewidth]{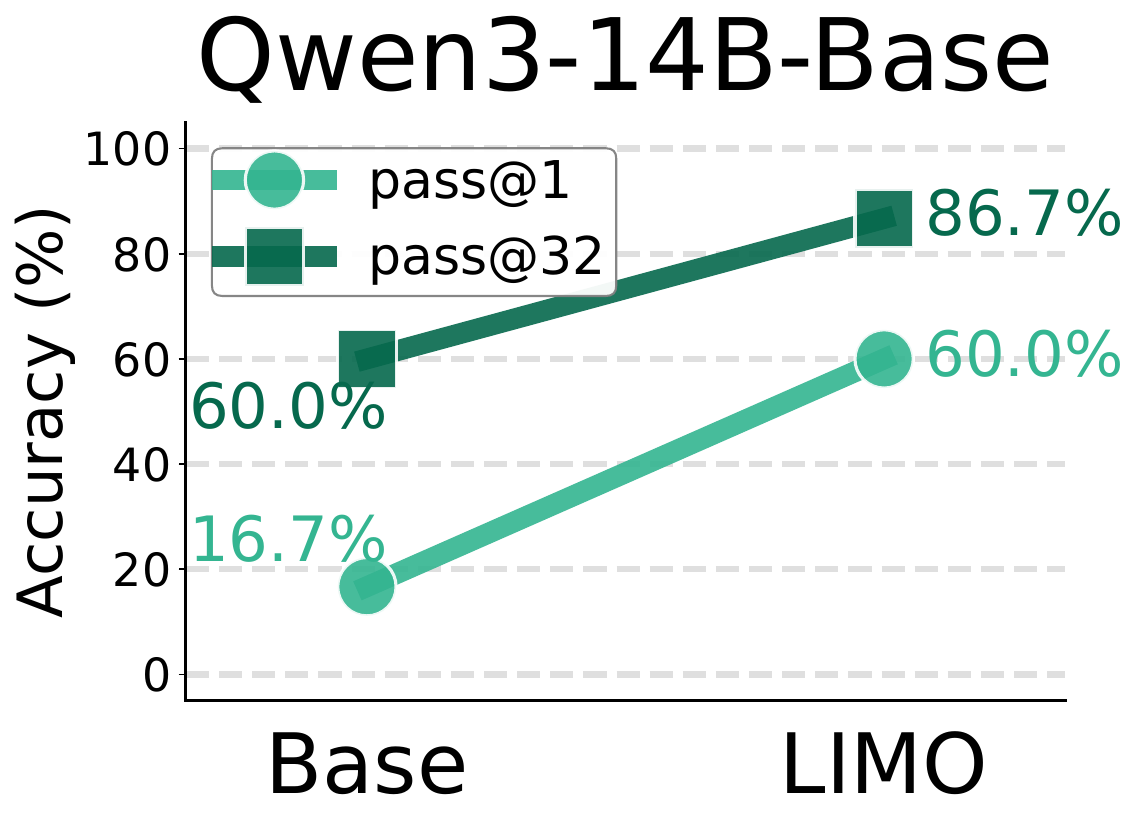} 
    \caption{Comparison of AIME24 pass@1 scores between base models and models trained with SFT on the LIMO-v2 dataset.}
    \label{fig:limo_aime}
\end{figure}

To explain this discrepancy, we analyze distributional alignment between the student models and the dataset via token-level log probabilities, focusing on how students evaluate frequent epistemic tokens such as "Wait" and "Alternatively." The contrast between successful and failed distillation is clear: as shown in Figure~\ref{fig:cumulative_distribution}, successful models exhibit well-aligned distributions in which epistemic tokens fall within the model's support, whereas poorly performing models show large gaps that place these tokens outside the support, hindering the adoption of the dataset's epistemic verbalization and control. Notably, in well-performing models, epistemic tokens remain low-probability and high-entropy relative to other tokens.

Together, the dispersion in distillation effectiveness from a high-reasoning model can be attributed to whether the base model's pre-existing characteristics are sufficiently "warmed up" to follow the high-reasoning model's epistemic verbalization. When the base model can readily absorb this ability, performance improves rapidly with only a small dataset, regardless of model size.

\begin{figure}[t!]
    \centering
    \includegraphics[width=\linewidth]{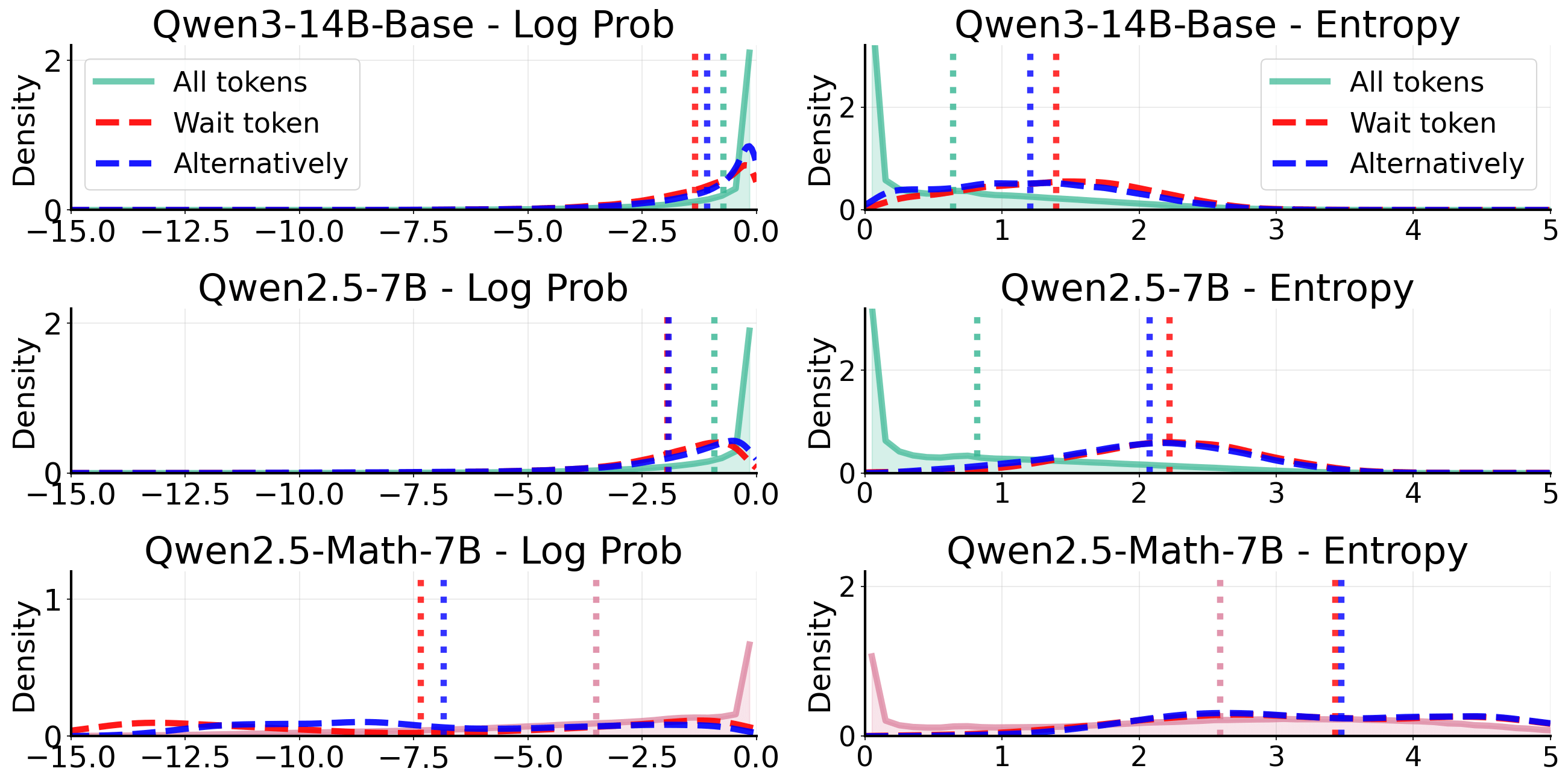}
    \caption{Distributions of token-level log-probability and entropy for all tokens versus the subset of "Wait" and "Alternative" tokens. Vertical blue and red lines indicate the dataset-level averages for "Wait" and "Alternative" tokens, respectively.}
    \label{fig:cumulative_distribution}

    \vspace{-0.3cm}
\end{figure}

\section{Conclusion}
In this work, we study effective reasoning in LLMs from an information-theoretic perspective, foregrounding the role of epistemic verbalization under uncertainty. We show that externalizing uncertainty enables continued information acquisition and self-correction, and reframe reasoning as a strategic allocation of information between 
procedural and epistemic axes. Empirically, we demonstrate that a minimal doubt cue suffices to recover failed trajectories, and that as few as 800 training samples can instill or suppress epistemic 
verbalization, suggesting it functions as a learnable linguistic habit rather than an intrinsic model capability. By distinguishing these two 
axes and their associated control actions, our framework clarifies otherwise elusive phenomena such as Aha moments and provides a general 
lens on contemporary LLM reasoning. We hope this perspective informs future theoretical work and inspires new approaches to modeling and controlling reasoning under uncertainty.

\section*{Limitations}

Our empirical analysis is mainly conducted on mathematical reasoning 
benchmarks, where correctness is objectively verifiable. Extending the 
analysis to other closed-world reasoning tasks and developing the 
formal world-Bayesian extension 
(Appendix~\ref{appendix:world_bayesian}) for open-world settings such 
as tool-augmented or interactive agents are left for future work. As discussed in Appendix \ref{appendix:world_bayesian}, external observations in such settings can independently surface latent errors, potentially reducing the model's reliance on epistemic verbalization. Empirically characterizing this trade-off is an interesting direction for future investigation.

Moreover, the 
nine epistemic tokens in Sections~\ref{sec:epistemic_tokens} 
and~\ref{section:test_time_control} serve as practical proxies and do not 
cover the full range of uncertainty expressions. While our core results 
do not rely on this specific set, a more thorough linguistic analysis 
of these proxies may be warranted. Finally, the classification of 
reactive and proactive correction, as well as the reasoning collapse 
analysis, depends on GPT-5 as an automated judge, introducing 
potential annotation noise.

\section*{Acknowledgments}
We sincerely thank Dohyung Kim and Sojeong Rhee for valuable discussions.
This work was supported by Microsoft Research.

\bibliography{custom}

@inproceedings{
limo,
title={{LIMO}: Less is More for Reasoning},
author={Yixin Ye and Zhen Huang and Yang Xiao and Ethan Chern and Shijie Xia and Pengfei Liu},
booktitle={Second Conference on Language Modeling},
year={2025},
url={https://openreview.net/forum?id=T2TZ0RY4Zk}
}

@article{limr,
  title={Limr: Less is more for rl scaling},
  author={Li, Xuefeng and Zou, Haoyang and Liu, Pengfei},
  journal={arXiv preprint arXiv:2502.11886},
  year={2025}
}

@inproceedings{
80_20,
title={Beyond the 80/20 Rule: High-Entropy Minority Tokens Drive Effective Reinforcement Learning for {LLM} Reasoning},
author={Shenzhi Wang and Le Yu and Chang Gao and Chujie Zheng and Shixuan Liu and Rui Lu and Kai Dang and Xiong-Hui Chen and Jianxin Yang and Zhenru Zhang and Yuqiong Liu and An Yang and Andrew Zhao and Yang Yue and Shiji Song and Bowen Yu and Gao Huang and Junyang Lin},
booktitle={The Thirty-ninth Annual Conference on Neural Information Processing Systems},
year={2025},
url={https://openreview.net/forum?id=yfcpdY4gMP}
}

@article{understanding_aha,
  title={Understanding aha moments: from external observations to internal mechanisms},
  author={Yang, Shu and Wu, Junchao and Chen, Xin and Xiao, Yunze and Yang, Xinyi and Wong, Derek F and Wang, Di},
  journal={arXiv preprint arXiv:2504.02956},
  year={2025}
}

@article{deepseek,
  title={Deepseek-r1: Incentivizing reasoning capability in llms via reinforcement learning},
  author={Guo, Daya and Yang, Dejian and Zhang, Haowei and Song, Junxiao and Zhang, Ruoyu and Xu, Runxin and Zhu, Qihao and Ma, Shirong and Wang, Peiyi and Bi, Xiao and others},
  journal={arXiv preprint arXiv:2501.12948},
  year={2025}
}

@article{qwen3,
  title={Qwen3 technical report},
  author={Yang, An and Li, Anfeng and Yang, Baosong and Zhang, Beichen and Hui, Binyuan and Zheng, Bo and Yu, Bowen and Gao, Chang and Huang, Chengen and Lv, Chenxu and others},
  journal={arXiv preprint arXiv:2505.09388},
  year={2025}
}

@article{qwen2.5,
  title={Qwen2. 5-math technical report: Toward mathematical expert model via self-improvement},
  author={Yang, An and Zhang, Beichen and Hui, Binyuan and Gao, Bofei and Yu, Bowen and Li, Chengpeng and Liu, Dayiheng and Tu, Jianhong and Zhou, Jingren and Lin, Junyang and others},
  journal={arXiv preprint arXiv:2409.12122},
  year={2024}
}

@article{step_dpo,
  title={Step-dpo: Step-wise preference optimization for long-chain reasoning of llms},
  author={Lai, Xin and Tian, Zhuotao and Chen, Yukang and Yang, Senqiao and Peng, Xiangru and Jia, Jiaya},
  journal={arXiv preprint arXiv:2406.18629},
  year={2024}
}

@inproceedings{step_by_step_math,
title={Step-by-Step Reasoning for Math Problems  via Twisted Sequential Monte Carlo},
author={Shengyu Feng and Xiang Kong and Shuang Ma and Aonan Zhang and Dong Yin and Chong Wang and Ruoming Pang and Yiming Yang},
booktitle={The Thirteenth International Conference on Learning Representations},
year={2025},
url={https://openreview.net/forum?id=Ze4aPP0tIn}
}

@article{chain_of_thought,
  title={Chain-of-thought prompting elicits reasoning in large language models},
  author={Wei, Jason and Wang, Xuezhi and Schuurmans, Dale and Bosma, Maarten and Xia, Fei and Chi, Ed and Le, Quoc V and Zhou, Denny and others},
  journal={Advances in neural information processing systems},
  volume={35},
  pages={24824--24837},
  year={2022}
}

@inproceedings{llamafactory,
  title={LlamaFactory: Unified Efficient Fine-Tuning of 100+ Language Models},
  author={Yaowei Zheng and Richong Zhang and Junhao Zhang and Yanhan Ye and Zheyan Luo and Zhangchi Feng and Yongqiang Ma},
  booktitle={Proceedings of the 62nd Annual Meeting of the Association for Computational Linguistics (Volume 3: System Demonstrations)},
  address={Bangkok, Thailand},
  publisher={Association for Computational Linguistics},
  year={2024},
  url={http://arxiv.org/abs/2403.13372}
}

@inproceedings{
understanding_cot,
title={Understanding Chain-of-Thought in {LLM}s through Information Theory},
author={Jean-Francois Ton and Muhammad Faaiz Taufiq and Yang Liu},
booktitle={Forty-second International Conference on Machine Learning},
year={2025},
url={https://openreview.net/forum?id=IjOWms0hrf}
}

@article{cot,
  title={Chain-of-thought prompting elicits reasoning in large language models},
  author={Wei, Jason and Wang, Xuezhi and Schuurmans, Dale and Bosma, Maarten and Xia, Fei and Chi, Ed and Le, Quoc V and Zhou, Denny and others},
  journal={Advances in neural information processing systems},
  volume={35},
  pages={24824--24837},
  year={2022}
}

@inproceedings{liang2025chain,
  title={Chain-of-thought reasoning for math: Theoretical foundation and applications},
  author={Liang, Jessica E},
  booktitle={2nd AI for Math Workshop@ ICML 2025},
  year={2025}
}

@article{yang2025decoupling,
  title={Decoupling Knowledge and Reasoning in LLMs: An Exploration Using Cognitive Dual-System Theory},
  author={Yang, Mutian and Gao, Jiandong and Wu, Ji},
  journal={arXiv preprint arXiv:2507.18178},
  year={2025}
}

@article{gan2025cot,
  title={CoT-Space: A Theoretical Framework for Internal Slow-Thinking via Reinforcement Learning},
  author={Gan, Zeyu and Yi, Hao and Liu, Yong},
  journal={arXiv preprint arXiv:2509.04027},
  year={2025}
}

@article{yong2025think,
  title={Think or Not? Exploring Thinking Efficiency in Large Reasoning Models via an Information-Theoretic Lens},
  author={Yong, Xixian and Zhou, Xiao and Zhang, Yingying and Li, Jinlin and Zheng, Yefeng and Wu, Xian},
  journal={arXiv preprint arXiv:2505.18237},
  year={2025}
}

@article{d2026illusion,
  title={The Illusion of Insight in Reasoning Models},
  author={d'Aliberti, Liv G and Ribeiro, Manoel Horta},
  journal={arXiv preprint arXiv:2601.00514},
  year={2026}
}

@inproceedings{
mi_peak,
title={Demystifying Reasoning Dynamics with Mutual Information: Thinking Tokens are Information Peaks in {LLM} Reasoning},
author={Chen Qian and Dongrui Liu and Haochen Wen and Zhen Bai and Yong Liu and Jing Shao},
booktitle={The Thirty-ninth Annual Conference on Neural Information Processing Systems},
year={2025},
url={https://openreview.net/forum?id=E1FrjgaG1J}
}

@article{tsui2025self,
  title={Self-Correction Bench: Revealing and Addressing the Self-Correction Blind Spot in LLMs},
  author={Tsui, Ken},
  journal={arXiv preprint arXiv:2507.02778},
  year={2025}
}

@inproceedings{
cannot_correct,
title={Large Language Models Cannot Self-Correct Reasoning Yet},
author={Jie Huang and Xinyun Chen and Swaroop Mishra and Huaixiu Steven Zheng and Adams Wei Yu and Xinying Song and Denny Zhou},
booktitle={The Twelfth International Conference on Learning Representations},
year={2024},
url={https://openreview.net/forum?id=IkmD3fKBPQ}
}

@inproceedings{tyen2024llms,
  title={LLMs cannot find reasoning errors, but can correct them given the error location},
  author={Tyen, Gladys and Mansoor, Hassan and C{\u{a}}rbune, Victor and Chen, Yuanzhu Peter and Mak, Tony},
  booktitle={Findings of the Association for Computational Linguistics: ACL 2024},
  pages={13894--13908},
  year={2024}
}

@inproceedings{
kamoi2024evaluating,
title={Evaluating {LLM}s at Detecting Errors in {LLM} Responses},
author={Ryo Kamoi and Sarkar Snigdha Sarathi Das and Renze Lou and Jihyun Janice Ahn and Yilun Zhao and Xiaoxin Lu and Nan Zhang and Yusen Zhang and Haoran Ranran Zhang and Sujeeth Reddy Vummanthala and Salika Dave and Shaobo Qin and Arman Cohan and Wenpeng Yin and Rui Zhang},
booktitle={First Conference on Language Modeling},
year={2024},
url={https://openreview.net/forum?id=dnwRScljXr}
}

@misc{maynot_aha,
  title={There May Not be Aha Moment in R1-Zero-like Training — A Pilot Study},
  author={Zichen Liu and Changyu Chen and Wenjun Li and Tianyu Pang and Chao Du and Min Lin},
  year={2025},
  howpublished={\url{https://oatllm.notion.site/oat-zero}},
  note={Notion Blog},
}

@article{yang2025dynamic,
  title={Dynamic early exit in reasoning models},
  author={Yang, Chenxu and Si, Qingyi and Duan, Yongjie and Zhu, Zheliang and Zhu, Chenyu and Li, Qiaowei and Chen, Minghui and Lin, Zheng and Wang, Weiping},
  journal={arXiv preprint arXiv:2504.15895},
  year={2025}
}

@article{hsic,
  title={A kernel statistical test of independence},
  author={Gretton, Arthur and Fukumizu, Kenji and Teo, Choon and Song, Le and Sch{\"o}lkopf, Bernhard and Smola, Alex},
  journal={Advances in neural information processing systems},
  volume={20},
  year={2007}
}

@inproceedings{reflact,
  title={Reflact: World-grounded decision making in llm agents via goal-state reflection},
  author={Kim, Jeonghye and Rhee, Sojeong and Kim, Minbeom and Kim, Dohyung and Lee, Sangmook and Sung, Youngchul and Jung, Kyomin},
  booktitle={Proceedings of the 2025 Conference on Empirical Methods in Natural Language Processing},
  pages={33421--33453},
  year={2025}
}

@inproceedings{
search-r1,
title={Search-R1: Training {LLM}s to Reason and Leverage Search Engines with Reinforcement Learning},
author={Bowen Jin and Hansi Zeng and Zhenrui Yue and Jinsung Yoon and Sercan O Arik and Dong Wang and Hamed Zamani and Jiawei Han},
booktitle={Second Conference on Language Modeling},
year={2025},
url={https://openreview.net/forum?id=Rwhi91ideu}
}

@article{agentmath,
  title={AgentMath: Empowering Mathematical Reasoning for Large Language Models via Tool-Augmented Agent},
  author={Luo, Haipeng and Feng, Huawen and Sun, Qingfeng and Xu, Can and Zheng, Kai and Wang, Yufei and Yang, Tao and Hu, Han and Tang, Yansong and Wang, Di},
  journal={arXiv preprint arXiv:2512.20745},
  year={2025}
}

@article{raise,
  title={Raise: Enhancing scientific reasoning in llms via step-by-step retrieval},
  author={Oh, Minhae and Kim, Jeonghye and Lee, Nakyung and Seo, Donggeon and Kim, Taeuk and Lee, Jungwoo},
  journal={arXiv preprint arXiv:2506.08625},
  year={2025}
}

@article{schick2023toolformer,
  title={Toolformer: Language models can teach themselves to use tools},
  author={Schick, Timo and Dwivedi-Yu, Jane and Dess{\`\i}, Roberto and Raileanu, Roberta and Lomeli, Maria and Hambro, Eric and Zettlemoyer, Luke and Cancedda, Nicola and Scialom, Thomas},
  journal={Advances in neural information processing systems},
  volume={36},
  pages={68539--68551},
  year={2023}
}

@article{yao2022react,
  title={React: Synergizing reasoning and acting in language models},
  author={Yao, Shunyu and Zhao, Jeffrey and Yu, Dian and Du, Nan and Shafran, Izhak and Narasimhan, Karthik and Cao, Yuan},
  journal={arXiv preprint arXiv:2210.03629},
  year={2022}
}

@article{song2025r1,
  title={R1-searcher: Incentivizing the search capability in llms via reinforcement learning},
  author={Song, Huatong and Jiang, Jinhao and Min, Yingqian and Chen, Jie and Chen, Zhipeng and Zhao, Wayne Xin and Fang, Lei and Wen, Ji-Rong},
  journal={arXiv preprint arXiv:2503.05592},
  year={2025}
}

\appendix

\section{Proof of Proposition~\ref{prop:epi_enables_sparse}}
\label{appendix:proof_convergence}

We first restate Proposition~\ref{prop:epi_enables_sparse} in a formal form that makes the dependence on $(\rho, \delta, \eta)$ explicit, and then provide a proof.

\begin{proposition}
\label{prop:epi_enables_sparse_formal}
Let $H_t := H(Y \mid \tilde S_t)$ and $\tau_\eta := \inf\{t \ge 0 : H_t \le \eta\}$. Suppose Assumption~\ref{assump:epi_info} holds, and that at every step $t$ with $H_{t-1} > \eta$, an epistemic token is generated with probability at least $\rho \in (0,1]$ (conditional on $\tilde S_{t-1}$). Then
\[
\mathbb{E}[\tau_\eta] \;\le\; \frac{\mathbb{E}[H_0]-\eta}{\rho\,\delta},
\]
\emph{regardless of the procedural trigger probability $p_E$}. If such pairs $(\rho(\eta), \delta(\eta))$ exist for every $\eta > 0$, then $\mathbb{E}[H_t] \to 0$ as $t \to \infty$.
\end{proposition}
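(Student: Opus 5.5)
The plan is a drift (Foster--Lyapunov) argument on the entropy process $H_t$, followed by optional stopping.

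Let $\mathcal{F}_t := \sigma(\tilde S_0,\dots,\tilde S_t)$. The first ingredient is that steps without an epistemic token do not increase uncertainty in expectation: $\mathbb{E}[H_t \mid \mathcal{F}_{t-1}] \le H_{t-1}$. This is the ``conditioning reduces entropy'' inequality, averaged over the next token. It is the only property of the procedural channel I need, and it makes no use of $p_E$; this is exactly why the bound holds regardless of the trigger probability.

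The second ingredient is the drift bound. On $\{H_{t-1} > \eta\}$, an epistemic token is emitted with conditional probability at least $\rho$, and by Assumption~\ref{assump:epi_info} it yields a decrease of at least $\delta$. Splitting the conditional expectation on whether a token is emitted, and applying the first ingredient on the complementary branch, gives
\[
\mathbb{E}[H_{t-1}-H_t \mid \mathcal{F}_{t-1}] \;\ge\; \rho\,\delta \quad\text{on } \{t \le \tau_\eta\}.
\]
I will state explicitly that the non-token branch must be non-increasing in conditional expectation. This is the modelling content hidden in ``regardless of $p_E$''.

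Next I form $M_t := H_{t\wedge\tau_\eta} + \rho\delta\,(t\wedge\tau_\eta)$. By the drift bound $M_t$ is a supermartingale, so
\[
\rho\delta\,\mathbb{E}[t\wedge\tau_\eta] \le \mathbb{E}[H_0] - \mathbb{E}[H_{t\wedge\tau_\eta}].
\]
Letting $t\to\infty$ and using monotone convergence gives $\mathbb{E}[\tau_\eta] < \infty$ with the claimed order of bound.

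The main obstacle is getting exactly $\mathbb{E}[H_0]-\eta$ rather than $\mathbb{E}[H_0]$. The stopped value $H_{\tau_\eta}$ lies below $\eta$, possibly near $0$, so the naive lower bound on $\mathbb{E}[H_{t\wedge\tau_\eta}]$ is only $0$. I would first try running the argument with the clipped process $\max(H_t,\eta)$. On $\{t\le\tau_\eta\}$ all steps start above $\eta$, so the overshoot at the final step can be absorbed. If that does not reproduce the drift $\rho\delta$ on the last step, I would accept the slightly weaker constant $\mathbb{E}[H_0]/(\rho\delta)$. Either constant suffices for the limit statement.

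For the limit, I split on $\{\tau_\eta \le t\}$. On this event, the supermartingale property from the first ingredient and optional stopping give $\mathbb{E}[H_t\mathbf{1}\{\tau_\eta\le t\}] \le \mathbb{E}[H_{\tau_\eta}\mathbf{1}\{\tau_\eta\le t\}] \le \eta$. On the complement, $H_t \le \log|\mathcal{Y}|$. Markov's inequality then gives
\[
\mathbb{E}[H_t] \le \eta + \log|\mathcal{Y}|\,\frac{\mathbb{E}[\tau_\eta]}{t}.
\]
Hence $\limsup_t \mathbb{E}[H_t] \le \eta$. Since valid $(\rho(\eta),\delta(\eta))$ exist for every $\eta>0$, letting $\eta\downarrow 0$ yields $\mathbb{E}[H_t]\to 0$.
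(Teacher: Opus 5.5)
Your route is the paper's: a conditional drift of at least $\rho\delta$ on $\{H_{t-1}>\eta\}$, summed up to $\tau_\eta$. Truncating at $t\wedge\tau_\eta$ and then using monotone convergence is the rigorous form of the paper's ``telescoping up to $\tau_\eta$'', which tacitly assumes $\tau_\eta<\infty$.

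The real issue is the constant. You should drop the clipping idea rather than try to rescue it, because the bound $(\mathbb{E}[H_0]-\eta)/(\rho\delta)$ is false. Take $\rho=1$ and $0<\epsilon<\delta\le\eta$. Let $Y=(Y_1,Y_2)$ have independent components with $H(Y_1)=\delta$ and $H(Y_2)=\eta+\epsilon-\delta$, and let the first token be epistemic and reveal $Y_1$. Then $H_0=\eta+\epsilon$ and $H_1=\eta+\epsilon-\delta\le\eta$, so all hypotheses hold and $\tau_\eta=1$. The claimed bound, however, is $\epsilon/\delta<1$. More generally, $H_0>\eta$ forces $\tau_\eta\ge1$, so the claim would require $\mathbb{E}[H_0]-\eta\ge\rho\delta$, which nothing guarantees.

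Clipping fails exactly where you feared: the last step of $\max(H_t,\eta)$ drops only by $H_{\tau_\eta-1}-\eta$, which can be arbitrarily small. The paper obtains $\mathbb{E}[H_0]-\eta$ through the step $\mathbb{E}[H_0]-\eta\ge\mathbb{E}[H_0]-\mathbb{E}[H_{\tau_\eta}]$. That step needs $\mathbb{E}[H_{\tau_\eta}]\ge\eta$, but $H_{\tau_\eta}\le\eta$ by definition, so the inequality is reversed. Your fallback $\mathbb{E}[\tau_\eta]\le\mathbb{E}[H_0]/(\rho\delta)$ (sharper: $(\mathbb{E}[H_0]-\mathbb{E}[H_{\tau_\eta}])/(\rho\delta)$) is therefore the correct statement, not a weakening. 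Present it as a correction to the proposition; as you note, the limit claim is unaffected.

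Two smaller points about hypotheses.

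First, the branch-wise non-increase you add does not follow from ``conditioning reduces entropy''. That inequality holds only on average over the full next-token distribution, and only if $P_\theta(Y\mid\cdot)$ is a coherent posterior. Conditioning on the event $\{V_t=0\}$ can raise entropy. So this is a genuine extra hypothesis, as is the paper's stronger pathwise assumption $\Delta_t\ge0$, which it asserts without justification.

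Second, your limit argument is more explicit than the paper's, which simply asserts that integrability of every $\tau_\eta$ gives $\mathbb{E}[H_t]\to0$. But it uses $H_t\le\log|\mathcal{Y}|$, i.e.\ finite $\mathcal{Y}$, which the statement does not supply. Under your in-expectation hypothesis that bound is really needed. Large upward jumps on the non-token branch, compensated in expectation, can keep $\mathbb{E}[H_t\mathbf{1}\{\tau_\eta>t\}]$ bounded away from zero even though every $\tau_\eta$ is integrable. The paper's pathwise monotonicity avoids this: it gives $H_t\le\eta$ for $t\ge\tau_\eta$ and $H_t\le H_0$, so dominated convergence finishes with no bound on $|\mathcal{Y}|$.
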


\begin{proof}
Let $V_t \in \{0,1\}$ indicate whether an epistemic token is generated at step $t$, and let $\Delta_t := H_{t-1} - H_t \ge 0$. By Assumption~\ref{assump:epi_info} and the lower bound $\mathbb{P}(V_t = 1 \mid \tilde S_{t-1}) \ge \rho$, on the event $\{H_{t-1} > \eta\}$ we have
\[
\mathbb{E}[\Delta_t \mid \tilde S_{t-1}] \;\ge\; \delta \cdot \mathbb{P}(V_t = 1 \mid \tilde S_{t-1}) \;\ge\; \rho\,\delta,
\]
which depends only on $(\rho, \delta)$ and not on $p_E$. Telescoping up to $\tau_\eta$ and taking expectations,
\[
\mathbb{E}[H_0] - \eta \;\ge\; \mathbb{E}[H_0] - \mathbb{E}[H_{\tau_\eta}] \;\ge\; \rho\,\delta \cdot \mathbb{E}[\tau_\eta],
\]
which rearranges to the stated bound. If $(\rho(\eta), \delta(\eta))$ exist for every $\eta > 0$, then $\mathbb{E}[\tau_\eta] < \infty$ for all $\eta > 0$, so $\mathbb{E}[H_t] \to 0$.
\end{proof}

\section{World-Bayesian Reasoning with External Observations}
\label{appendix:world_bayesian}

We now extend the framework beyond the closed-world setting to reasoning processes that incorporate external observations. This includes embodied agents, tool-augmented language models, and interactive systems that acquire information from their environment during inference \citep{yao2022react, schick2023toolformer, search-r1, song2025r1, reflact, agentmath}.

In contrast to the closed-world, self-Bayesian setting, we consider \emph{world-Bayesian reasoning}, in which the model observes additional environmental signals that are statistically informative about the target variable. As we show below, the availability of external observations fundamentally changes the role of epistemic verbalization: it is no longer the sole mechanism for escaping silent divergence, which in turn clarifies why epistemic verbalization is so critical in the closed-world regime studied in the main text.

\paragraph{Setup.}
Let $x$ denote the initial input and $Y$ the target variable. At each reasoning step $t$, the agent may take an action $a_t$ and receive an external observation $o_t \in \mathcal{O}$, where
\[
  o_t \sim P(o_t \mid Y, a_t, s_{t-1}).
\]
The reasoning state now evolves as
\[
  s_t := (s_{t-1}, a_t, o_t), \qquad s_0 := x.
\]
Unlike internally generated tokens, observations $o_t$ constitute exogenous random variables that may introduce new information about $Y$. Each state $s_t$ induces a predictive distribution $P_\theta(Y \mid s_t)$, which may now be interpreted as an approximate Bayesian posterior updated through both self-conditioning and external evidence.

\paragraph{Information gain from external observation.}
The information gain associated with an external observation is given by
\begin{align*}
  \mathrm{IG}_{\mathrm{ext}}(s_t)
  &= H(Y \mid s_{t-1}) - H(Y \mid s_{t-1}, o_t) \\
  &= I(Y;\; o_t \mid s_{t-1}).
\end{align*}
This quantity is strictly non-negative and may be positive even when procedural self-conditioning yields no additional information. In particular, a single informative tool call or environmental observation can resolve an ambiguity that would otherwise require extended internal deliberation.

\paragraph{Reduced dependence on epistemic verbalization.}
In the closed-world setting, the central challenge is that silent divergence
produces no overt error, leaving epistemic verbalization as the \emph{only}
mechanism through which the model can regain traction
(Proposition~\ref{prop:epi_enables_sparse}). The world-Bayesian setting
relaxes this constraint. External observations can \emph{directly} surface
errors that remain latent under purely procedural reasoning. A tool call
that returns a contradictory result, or an environmental signal that
conflicts with the model's current trajectory, provides an exogenous
trigger for correction, one that does not depend on the model's ability
to verbalize its own uncertainty. In this regime, the burden of escaping
silent divergence is shared between the epistemic channel and the external
observation channel, reducing the model's reliance on the former.

\paragraph{Residual role of epistemic verbalization.}
Despite its diminished necessity, epistemic verbalization retains a residual role in the world-Bayesian setting: it serves as an internal monitoring signal that can trigger the \emph{decision} to seek external information. Recognizing that one's current trajectory is uncertain is a prerequisite for deciding to query a tool, request clarification, or perform an experiment. In this sense, epistemic verbalization shifts from being the primary mechanism of uncertainty reduction to a secondary role as a \emph{dispatch signal} that routes the policy toward external actions when internal reasoning is insufficient.

\paragraph{Relationship to the main framework.}
The closed-world framework arises as a special case when $\mathcal{O} = \emptyset$. In this regime, all uncertainty reduction must come from internal belief transformation, making epistemic verbalization the sole escape route from silent divergence. The contrast with the world-Bayesian setting highlights why epistemic verbalization plays such a disproportionate role in the closed-world regime: these LLMs lack external corrective signals entirely. Conversely, in tool-augmented or interactive systems, the performance gap between models with and without epistemic verbalization may narrow as the external observation channel becomes richer. Empirically characterizing this trade-off is an interesting direction for future work.

\section{Limits of Internal Uncertainty in Procedural Reasoning.}

A natural response to the limitations of procedural reasoning is to leverage model uncertainty as a corrective signal. Token-level entropy,
\begin{equation}
    H(A_t \mid s_{t-1}) = -\mathbb{E}_{a' \sim P_\theta(\cdot \mid s_{t-1})}\bigl[\log P_\theta(a' \mid s_{t-1})\bigr].
\end{equation}
has been widely studied as a proxy for reasoning uncertainty~\citep{yong2025think, yang2025dynamic, d2026illusion}. While token-level entropy can serve as a useful proxy for the model's internal uncertainty, it captures local confidence over the next token rather than uncertainty about the target variable $Y$, and this gap limits its effectiveness as a corrective signal.
\label{sec:internal_limit}
\begin{figure}[h!]
    \centering
    \includegraphics[width=0.493\linewidth]{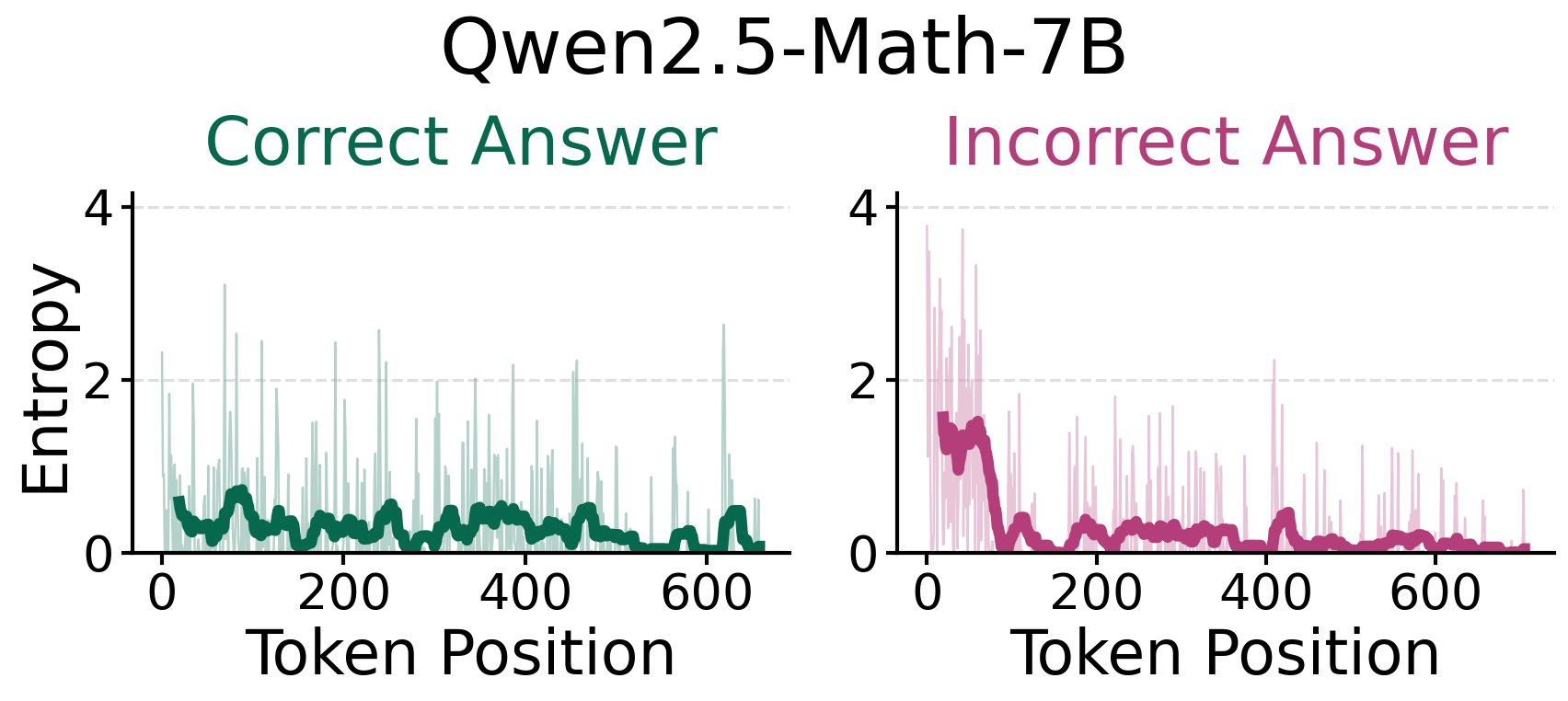}
    \includegraphics[width=0.493\linewidth]{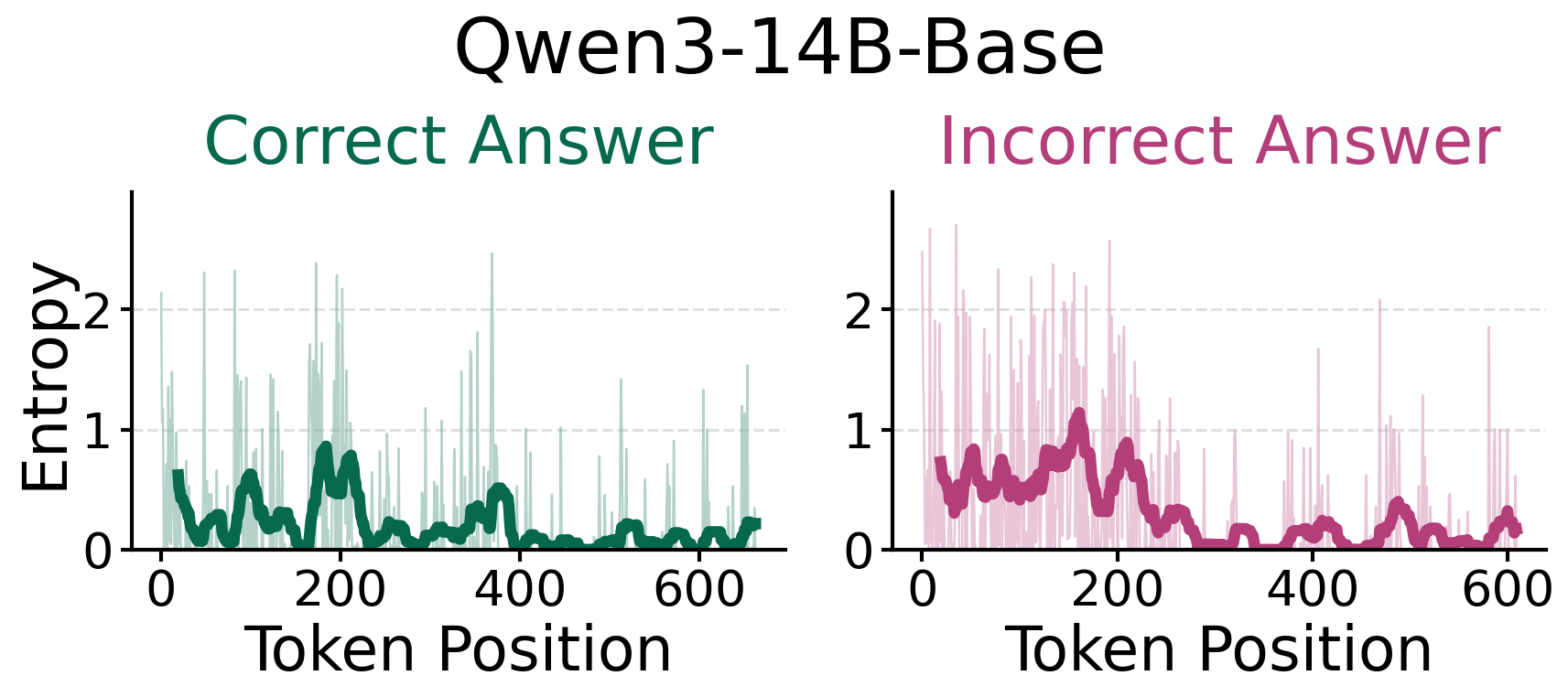}
    \caption{Token-level entropy over reasoning steps for Qwen2.5-Math-7B and Qwen3-14B-Base on AIME24 decreases similarly in both correct and incorrect solutions, suggesting that entropy alone does not reliably reflect uncertainty toward the correct answer.}
    \label{fig:token_level_uncertainty}
\end{figure}

Figure~\ref{fig:token_level_uncertainty} illustrates a case where token-level entropy fails to distinguish correct from incorrect reasoning: the model becomes locally more confident at similar rates regardless of whether its trajectory is productive, suggesting that entropy alone may not suffice as a corrective signal.

\section{Details of the Analysis}

\subsection{Reasoning Collapse Analysis}
\label{appendix:reasoning_collapse}

This appendix details the procedure behind the reasoning-collapse
statistics in Section~\ref{sec:procedural_limit}.

\paragraph{Pipeline.}
For each of the eight models analyzed in the main text, we sample
$N{=}8$ traces per problem across AIME24/25, AMC23, and MATH500
($4{,}800$ responses per model). Incorrect responses are passed to
an LLM judge (GPT-5-2025-08-07) using the prompt below, which returns a JSON object
indicating whether the trace exhibits collapse, its dominant type,
the onset sentence, and a brief justification. The collapse rate is
the fraction of incorrect responses flagged as collapsed.

\paragraph{Per-model statistics.}
Table~\ref{tab:collapse_stats} reports aggregate results on the Qwen
family. Collapse accounts for $54$--$62\%$ of all incorrect
responses ($58.4\%$ overall). Scaling lowers the total error rate
but leaves the share of collapse among remaining errors roughly
unchanged, indicating that larger models reduce surface-level
mistakes faster than silent divergence.

\begin{table}[h]
\centering
\small
\setlength{\tabcolsep}{4pt}
\begin{tabular}{p{2.4cm}|C{0.9cm}C{1.1cm}C{0.9cm}C{1cm}}
\toprule
Model & \#Inc. & Err.\ Rate & \#Col. & Col./Inc. \\
\midrule
Qwen2.5-7B       & 2{,}568 & 53.5\% & 1{,}452 & 56.5\% \\
Qwen2.5-32B      & 1{,}202 & 25.0\% &   717   & 59.7\% \\
Qwen3-1.7B-Base  & 2{,}486 & 51.8\% & 1{,}534 & 61.7\% \\
Qwen3-8B-Base    & 1{,}987 & 41.4\% & 1{,}091 & 54.9\% \\
Qwen3-14B-Base   & 1{,}401 & 29.2\% &   839   & 59.9\% \\
\midrule
Total            & 9{,}644 & 40.2\% & 5{,}633 & 58.4\% \\
\bottomrule
\end{tabular}
\caption{Reasoning collapse statistics. Each model is evaluated on
$600$ problems with $8$ sampled responses. "Inc." = incorrect
responses, "Col." = collapsed responses.}
\label{tab:collapse_stats}
\end{table}

\paragraph{Judge prompt.} We used GPT-5 with the prompt below to detect reasoning collapse.

\begin{tcolorbox}[
    colback=white,
    title=\small{Prompt for Reasoning Collapse Detection},
    colframe=gray,
    coltitle=black,
    colbacktitle=gray!10,
    breakable,
]
\small
\textbf{[System Prompt]}
\medskip

You are an expert at analyzing mathematical reasoning traces.
Your task is to detect reasoning collapse patterns in failed solution
attempts.
\tcblower
\small
\textbf{[User Prompt]}
\medskip

Analyze the following INCORRECT math solution and determine if it
exhibits a reasoning collapse pattern. Reasoning collapse includes
ANY of the following:
\begin{enumerate}[itemsep=0pt, topsep=0pt, parsep=0pt, partopsep=0pt, leftmargin=*]
    \item \textbf{REPETITION}: Repeating the same sentences, phrases,
    or steps in a loop.
    \item \textbf{TOPIC DRIFT}: Suddenly solving a completely
    different or made-up problem.
    \item \textbf{INCOHERENCE}: Statements become logically
    disconnected or nonsensical mid-solution.
    \item \textbf{INFINITE LOOP}: Getting stuck cycling through the
    same dead-end approach repeatedly without progress.
    \item \textbf{HALLUCINATION SPIRAL}: Inventing fictitious
    theorems, formulas, or numeric facts and building on them.
\end{enumerate}
Respond with a JSON object only:

\begin{lstlisting}[
    basicstyle=\small\ttfamily,
    breaklines=true,
    breakatwhitespace=false,
    columns=fullflexible,
]
{
  "has_collapse": true/false,
  "confidence": "high"/"medium"/"low",
  "collapse_type": "repetition"/"topic_drift"/"incoherence"
                   /"infinite_loop"/"hallucination_spiral"/null,
  "collapse_onset": "the sentence or phrase where collapse begins",
  "reasoning": "1-2 sentence explanation of your judgment"
}
\end{lstlisting}
\medskip
\textbf{Question:} \{question\}
\medskip

\textbf{Solution:} \{solution\}
\medskip

\textbf{Correct Answer:} \{gold\_answer\}
\end{tcolorbox}

\subsection{Reactive and Proactive Correction Analysis}
\label{appendix:reactive_proactive_correction}

Next, we provide details of the reactive and proactive correction classification in Section \ref{sec:collapse_escape}.

\paragraph{Stage~1: Self-correction detection.} To categorize all self-correction cases observed per model, we first apply the prompt to all correct-answer responses.

\begin{tcolorbox}[
    colback=white,
    title=\small{Prompt for Self-Correction Pattern Detection},
    colframe=gray,
    coltitle=black,
    colbacktitle=gray!10,
    breakable,
]
\small
\textbf{[System Prompt]}
\medskip
You are an expert at analyzing mathematical reasoning traces.
Your task is to determine whether a solution demonstrates a specific
self-correction pattern.
\tcblower
\small
\textbf{[User Prompt]}
\medskip
Analyze the following math solution and determine if it exhibits the
self-correction pattern.

The self-correction pattern requires ALL of the following:
\begin{enumerate}[itemsep=0pt, topsep=0pt, parsep=0pt, partopsep=0pt, leftmargin=*]
    \item The solver initially pursues a \textbf{WRONG} approach,
    method, or makes a significant mistake.
    \item The solver \textbf{RECOGNIZES} the error.
    \item The solver \textbf{ABANDONS} the wrong path and takes a
    different approach.
    \item The solver arrives at the \textbf{CORRECT} final answer.
\end{enumerate}
Respond with a JSON object only:
\begin{lstlisting}[
    basicstyle=\small\ttfamily,
    breaklines=true,
    breakatwhitespace=false,
    columns=fullflexible,
]
{
  "has_self_correction": true/false,
  "confidence": "high"/"medium"/"low",
  "wrong_path_description": "brief description of the initial wrong approach",
  "correction_trigger": "the exact phrase/sentence where the solver recognizes the error",
  "reasoning": "1-2 sentence explanation of your judgment"
}
\end{lstlisting}
\medskip
\textbf{Question:} \{question\}
\medskip

\textbf{Solution:} \{solution\}
\medskip

\textbf{Correct Answer:} \{gold\_answer\}
\end{tcolorbox}

\paragraph{Stage~2: Trigger classification (reactive vs.\ proactive).}

We then pass all self-correction cases identified above to the prompt below to
classify each correction as \emph{reactive} (evidence-based) or
\emph{proactive} (suspicion-based).

\begin{tcolorbox}[
    colback=white,
    title=\small{Prompt for Self-Correction Trigger Classification},
    colframe=gray,
    coltitle=black,
    colbacktitle=gray!10,
    breakable,
]
\small
\textbf{[System Prompt]}
\medskip
You are an expert at analyzing mathematical reasoning traces.
\tcblower
\small
\textbf{[User Prompt]}
\medskip
A language model was solving a math problem and performed
self-correction. Classify what triggered the self-correction into
one of two types:

\smallskip
\textbf{evidence\_based}: The model encountered a clear, explicit
signal that something was wrong BEFORE deciding to correct. Examples:
\begin{itemize}[itemsep=0pt, topsep=0pt, parsep=0pt, partopsep=0pt, leftmargin=*]
    \item A mathematical result that is impossible
    (negative angle, $\sin > 1$, division by zero).
    \item A result that explicitly violates a stated condition
    (e.g., $x=6$ but problem says $x>7$).
    \item A logical contradiction derived from the work
    (e.g., $6=3$, empty solution set, negative count).
    \item An exhausted search that found nothing
    (tried all cases, none worked).
\end{itemize}

\smallskip
\textbf{suspicion\_based}: The model decided to re-examine WITHOUT a
clear signal of error. Examples:
\begin{itemize}[itemsep=0pt, topsep=0pt, parsep=0pt, partopsep=0pt, leftmargin=*]
    \item Vague unease (``let me reconsider'', ``I'm not sure'',
    ``let me double-check'').
    \item The model just tried a different approach without
    identifying what was wrong.
    \item Re-examination triggered by aesthetic/intuitive concern
    rather than a concrete contradiction.
\end{itemize}

\smallskip
Respond with a JSON object only:
\begin{lstlisting}[
    basicstyle=\small\ttfamily,
    breaklines=true,
    breakatwhitespace=false,
    columns=fullflexible,
]
{
  "trigger_type": "evidence_based" or "suspicion_based",
  "confidence": "high" or "medium" or "low",
  "evidence": "the specific contradiction/signal that triggered correction (if evidence_based), or null",
  "reasoning": "1 sentence explanation"
}
\end{lstlisting}
\end{tcolorbox}

\subsection{Analysis of the Precision of Proactive Signals}  

To analyze the precision of proactive signals in Section \ref{sec:proactive},
we additionally sample 80 responses per model (320 total across the four LRMs)
that contain at least one proactive-pattern verbalization, and verify via GPT-5
whether the model was actually on a wrong path at that point using the prompt below.

\begin{tcolorbox}[
    colback=white,
    title=\small{Prompt for Wrong-Path Verification at Doubt Point},
    colframe=gray,
    coltitle=black,
    colbacktitle=gray!10,
    breakable,
]
\small
\textbf{[System Prompt]}
\medskip
You are an expert at analyzing LLM mathematical reasoning traces.
\tcblower
\small
\textbf{[User Prompt]}
\medskip
A language model was solving a math problem. At the marked point
\texttt{[DOUBT]}, it expressed uncertainty or suspicion without an
explicit error signal.

\smallskip
Was the model actually on a \textbf{WRONG} reasoning path just before
\texttt{[DOUBT]}? Or was it on a \textbf{CORRECT} path expressing
unnecessary doubt?

\smallskip
\textbf{Question:} \{question\}

\textbf{Gold answer:} \{gold\_answer\}

\smallskip
\textbf{Reasoning excerpt (ends at \texttt{[DOUBT]}):}
\begin{lstlisting}[
    basicstyle=\small\ttfamily,
    breaklines=true,
    breakatwhitespace=false,
    columns=fullflexible,
]
"""
{excerpt}
[DOUBT: "{trigger}"]
"""
\end{lstlisting}

\smallskip
Respond with JSON only:
\begin{lstlisting}[
    basicstyle=\small\ttfamily,
    breaklines=true,
    breakatwhitespace=false,
    columns=fullflexible,
]
{
  "was_on_wrong_path": true | false,
  "confidence": "high" | "medium" | "low",
  "reasoning": "<one sentence>"
}
\end{lstlisting}
\end{tcolorbox}

\subsection{Identifying Epistemic Tokens}

To identify epistemic tokens in Section \ref{sec:epistemic_tokens}, we apply the characterization prompt below to all proactive (suspicion-based)
self-correction cases identified in Appendix \ref{appendix:reactive_proactive_correction}
for the four LRMs that exhibit meaningful proactive correction rates. GPT-5 is asked
to (i)~distill the minimal trigger tokens, (ii)~assign a free-form verbalization label,
and (iii)~describe the epistemic function expressed.

\label{appendix:identifying_epistemic_tokens}

\begin{tcolorbox}[
    colback=white,
    title=\small{Prompt for Suspicion Verbalization Characterization},
    colframe=gray,
    coltitle=black,
    colbacktitle=gray!10,
    breakable,
]
\small
\textbf{[System Prompt]}
\medskip
You are an expert at analyzing LLM reasoning traces.
Your task is to characterize the type of verbalization a language
model uses when it expresses uncertainty or doubt without any clear
evidence of error --- no mathematical contradiction, no violated
constraint, just vague unease.
\tcblower
\small
\textbf{[User Prompt]}
\medskip
A language model was solving a math problem. At some point it
expressed uncertainty or suspicion --- without any concrete evidence
that something was wrong --- using the following phrase.

\smallskip
Your tasks:
\begin{enumerate}[itemsep=0pt, topsep=0pt, parsep=0pt, partopsep=0pt, leftmargin=*]
    \item Extract the minimal ``trigger tokens'' --- the 1--5 word
    core phrase that most directly expresses the uncertainty
    (e.g.\ ``wait, maybe'', ``hmm'', ``i'm not sure'').
    \item Give this verbalization a short, descriptive label of your
    own choosing (2--4 words, snake\_case). Do NOT use any predefined
    list --- describe what you actually observe.
    \item In one sentence, describe what internal state the model is
    expressing through this verbalization.
\end{enumerate}

\smallskip
\textbf{Question context:} \{question\}

\textbf{Prior reasoning path:} \{wrong\_path\}

\textbf{Verbalization phrase:} ``\{trigger\}''

\textbf{Surrounding context:} \{reasoning\}

\smallskip
Respond with JSON only:
\begin{lstlisting}[
    basicstyle=\small\ttfamily,
    breaklines=true,
    breakatwhitespace=false,
    columns=fullflexible,
]
{
  "trigger_tokens": "<1-5 word core phrase, lowercased>",
  "verbalization_label": "<your 2-4 word descriptive label in snake_case>",
  "epistemic_function": "<one sentence describing what the model is expressing>",
  "confidence": "high" | "medium" | "low"
}
\end{lstlisting}
\end{tcolorbox}

\section{Details of the Experimental Setup}

\paragraph{Training hyperparameters.}

For all distillation experiments, all models are fine-tuned using LLaMA-Factory \cite{llamafactory} under the default LIMO configuration \cite{limo}. We utilize four B200 GPUs for training.

\begin{table}[h!]
    \centering
    \small
    \renewcommand{\arraystretch}{1.3}
    \caption{AIME24 pass@1 performance across various base models with varying temperature and top-p configurations.}
    \begin{tabular}{l|p{0.9cm}p{0.85cm}p{0.85cm}}
        \hline
        Temperature (Top-P) & 0.0 & 0.7 (1.0) & 0.7 (0.8) \\
        \hline
        DeepSeek-R1-Distill-32B & 50.0 & 70.0 & \textbf{80.0} \\
        DeepSeek-R1-Distill-14B & 56.67 & 56.67 & \textbf{70.0} \\
        DeepSeek-R1-Distill-1.5B & 20.0 & \textbf{30.0} & \textbf{30.0} \\
        \hline
        DeepSeek-Math-7B\scriptsize{-Instruct} & 3.33 & 0.0 & 0.0\\
        \hline
        Qwen2.5-Math-7B & \textbf{16.67} & 10.0 & 6.67\\
        Qwen2.5-7B & \textbf{13.33} &  6.67 & 3.33 \\
        \hline
        Qwen3-1.7B-Base  & 0.0 & 0.0 & 6.67 \\
        Qwen3-4B-Base & 6.67 &  \textbf{13.33} & 6.67 \\
        Qwen3-8B-Base & 13.33 & \textbf{16.67} & 10.0 \\
        Qwen3-14B-Base  & \textbf{16.67} & 10.0 & 13.33\\
        \hline
         
    \end{tabular}
    \label{table:temperature_ablation}
\end{table}

\paragraph{Selection of decoding hyperparameters.}
We choose the decoding temperature mainly as 0.0 for calculating pass@1 and 0.7 for calculating pass@k or acc@16. However, there are some concerns, especially for recent models that exhibit more explicit reasoning behavior, such as DeepSeek or Qwen3, that a temperature of 0.0 may lead to performance degradation. Since we could not find detailed guidance for the Qwen3-base model, we additionally conduct an ablation study with varying temperatures to verify this effect when computing the pass@1 score.

As shown in Table \ref{table:temperature_ablation}, a temperature of 0.0 is harmful for the DeepSeek-Distill models, while it is beneficial for the Qwen2.5 models. For the Qwen3-Base models, the effect varies across settings. In particular, for the DeepSeek-Distill models, setting Top-P to 0.8 provides a substantial performance gain. Following the results in Table \ref{table:temperature_ablation}, we report the pass@1 performance of the base model using the larger value between temperatures 0.0 and 0.7, while the LIMO pass@1 performance is always reported with the temperature fixed at 0.0.

\section{Epistemic Verbalization Produces Information Gain}
\label{appendix:epistemic_verbalization_informativeness}

\citet{mi_peak} observe that, in LRM reasoning, most steps carry little mutual information (MI) with the correct answer, while a small number of steps exhibit sharp MI increases (``MI peaks'') that frequently coincide with thinking tokens such as ``Wait''. In this section, we extend their analysis to examine whether \textbf{epistemic verbalization indeed induces information gain}.

\paragraph{Setup.}
Building on the official implementation of \citet{mi_peak} (\url{https://github.com/ChnQ/MI-Peaks}), we measure the statistical dependence between the hidden representation at each token position and the ground-truth answer representation. Specifically, for each token position $t$ in the model-generated reasoning trajectory, we extract the last-layer hidden state $\mathbf{h}_t$ and compute its mutual information with the final-token representation of the ground-truth answer, $\mathbf{h}_{\text{GT}}$, using an HSIC-based estimator~\citep{hsic}. This enables \textbf{token-level tracking of when and how answer-relevant information emerges} in the model's internal representations during reasoning.

\paragraph{Results.}
To examine whether epistemic verbalization drives information gain, we compare two models from the same family (Qwen3-8B-Base) on the same problem: one that recovers the correct answer through self-correction, and one that fails to recover. Since epistemic verbalization and self-correction are rarely observed in Qwen3-Base, we additionally include \textbf{Qwen3-8B-SFT}, fine-tuned from the same base on reasoning traces rich in epistemic verbalization~\citep{limo} (as shown in Section~\ref{sec:limo_distillation}), to elicit such behavior at inference time. On AIME24 \#7, both models initially commit to an incorrect trajectory, but only \textbf{Qwen3-8B-SFT recovers via self-correction}, while Qwen3-Base remains on the incorrect path (trajectories in Appendix~\ref{appendix:base_limo_trajectory}).

\begin{figure}[h!]
    \centering
    \includegraphics[width=0.95\linewidth]{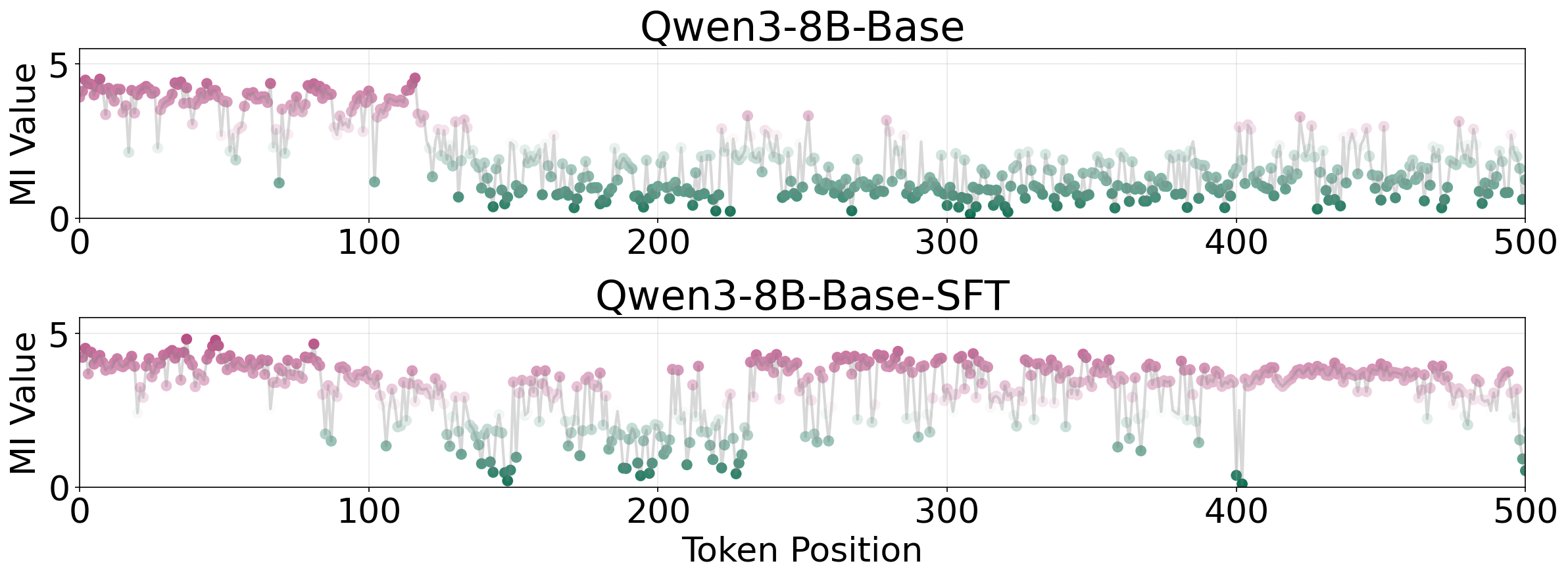}
    \caption{On AIME24 \#7, both models initially fail, but only 
    Qwen3-8B-SFT sustains information gain through epistemic verbalization 
    and recovers the correct answer.}
    \label{fig:mi_trajectory}
\end{figure}

As shown in Figure~\ref{fig:mi_trajectory}, Qwen3-8B-Base's MI collapses to near zero shortly after divergence, whereas Qwen3-8B-SFT sustains elevated MI while producing evaluative expressions such as ``Wait, let me check.'' A closer inspection of high-MI regions reveals that MI tends to rise at thinking tokens, but not always. Instead, elevated MI appears in utterances that perform epistemic verbalization, and thinking tokens occurring \emph{without} such verbalization (e.g., "Hmm" in the left panel of Figure~\ref{fig:mi_peak}) do not trigger an MI increase.

\begin{figure}[h!]
    \centering
    \includegraphics[width=\linewidth]{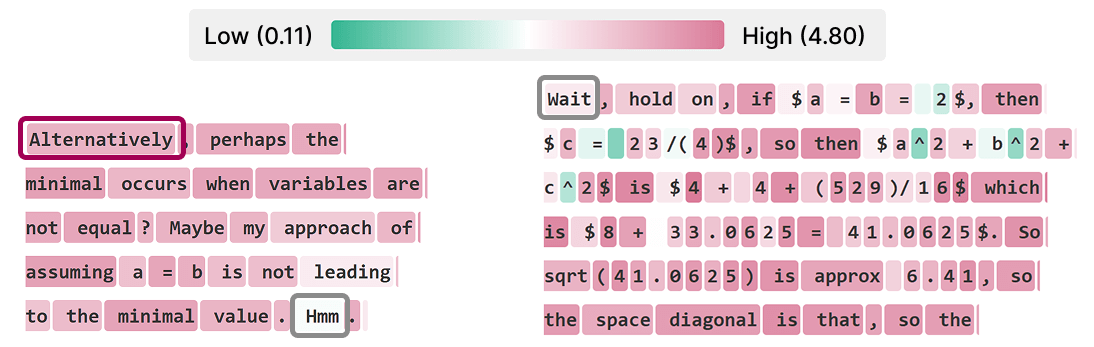}
    \caption{Token-level MI analysis: high MI aligns with evaluative 
    behavior, not with the thinking tokens themselves.}
    \label{fig:mi_peak}
\end{figure}

Aligning with our framework and the preceding discussion, these tokens are surface manifestations of a deeper mechanism. They externalize uncertainty, rendering the model's epistemic state conditionable and reusable during inference.


\section{More Relationship Between Uncertainty and Epistemic Verbalization} \label{appendix:more_epistemic}

We further investigate whether the uncertainty expressed during reasoning truly reflects the model's underlying uncertainty. Leveraging the observation that more challenging problems tend to elicit greater uncertainty, we analyze how uncertainty is verbalized in the outputs of strong reasoning models that exhibit Aha-moment or self-reflective behaviors.

\paragraph{DeepSeek-R1-Distill-Qwen.} We first study DeepSeek-R1-Distill-Qwen models ranging from 1.5B to 14B parameters, comparing their performance and response length.

\begin{figure}[ht!]
    \centering
    \includegraphics[width=\linewidth]{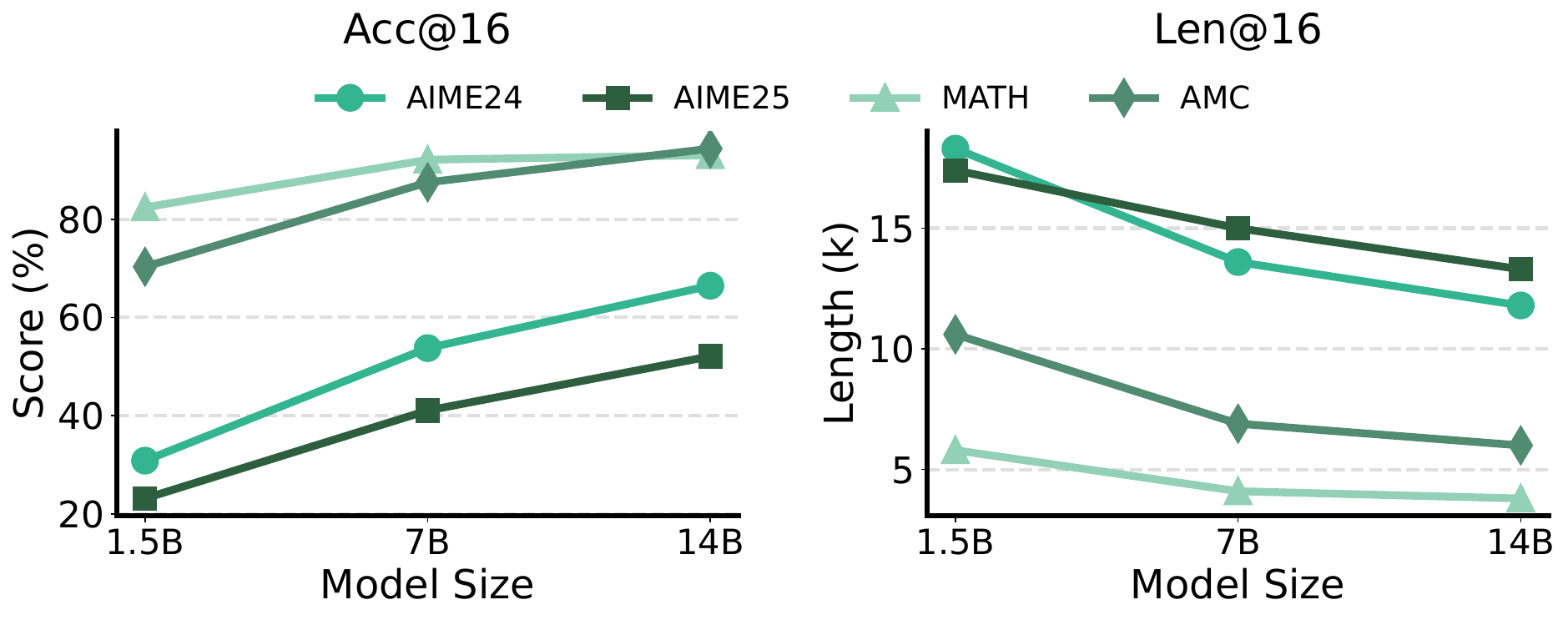}
    \caption{Acc@16 (avg. score) and Len@16 (avg. response length) of DeepSeek-Distill 1.5B–14B on math benchmarks.}
    \label{figure:uncertainty_epistemic}
\end{figure}

As shown in Figure~\ref{figure:uncertainty_epistemic}, more difficult problems (AIME24/25) elicit longer responses than easier ones (AMC23/MATH500). Increasing model size is associated with higher scores and shorter responses.

We further examined how frequently DeepSeek-R1-Distill-Qwen-\{1.5B, 7B, 14B\} generate epistemic tokens during solution generation. As shown in Figure~\ref{figure:uncertainty_epistemic}, smaller models use these tokens more frequently: relative to the 14B model, the 1.5B model produces 75\% more occurrences of "Wait" and 235\% more of "Perhaps" in AIME24. This suggests that when reasoning models face problems beyond their capacity, they verbalize epistemic uncertainty more often. We also observe that epistemic tokens appear more frequently on harder benchmarks (AIME24/25) and less frequently on easier ones (AMC23/MATH500), indicating that these tokens indeed reflect the uncertainty the model experiences during reasoning.

\begin{figure}[h!]
    \centering
    \includegraphics[width=0.49\linewidth]{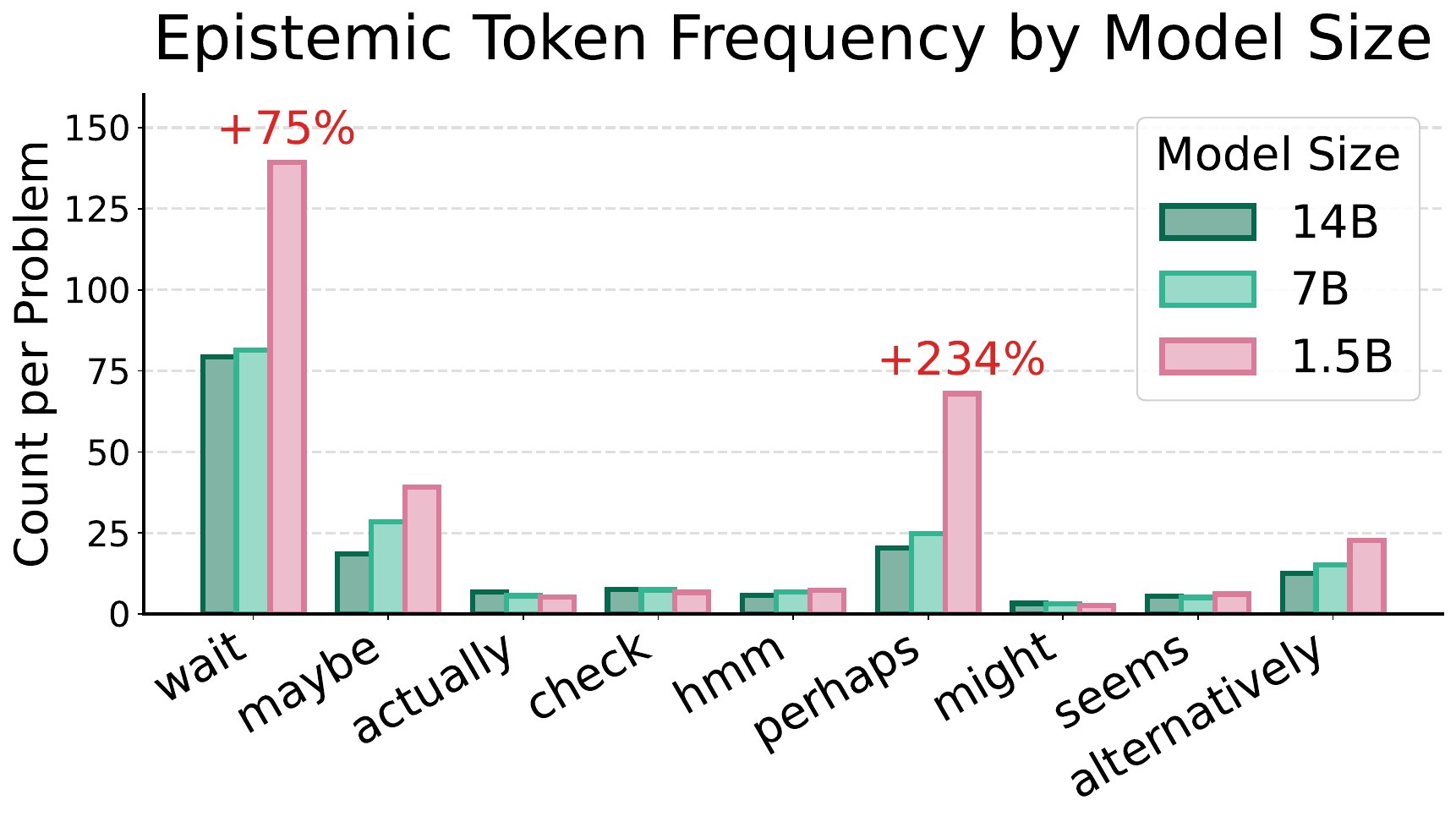}
    \includegraphics[width=0.49\linewidth]{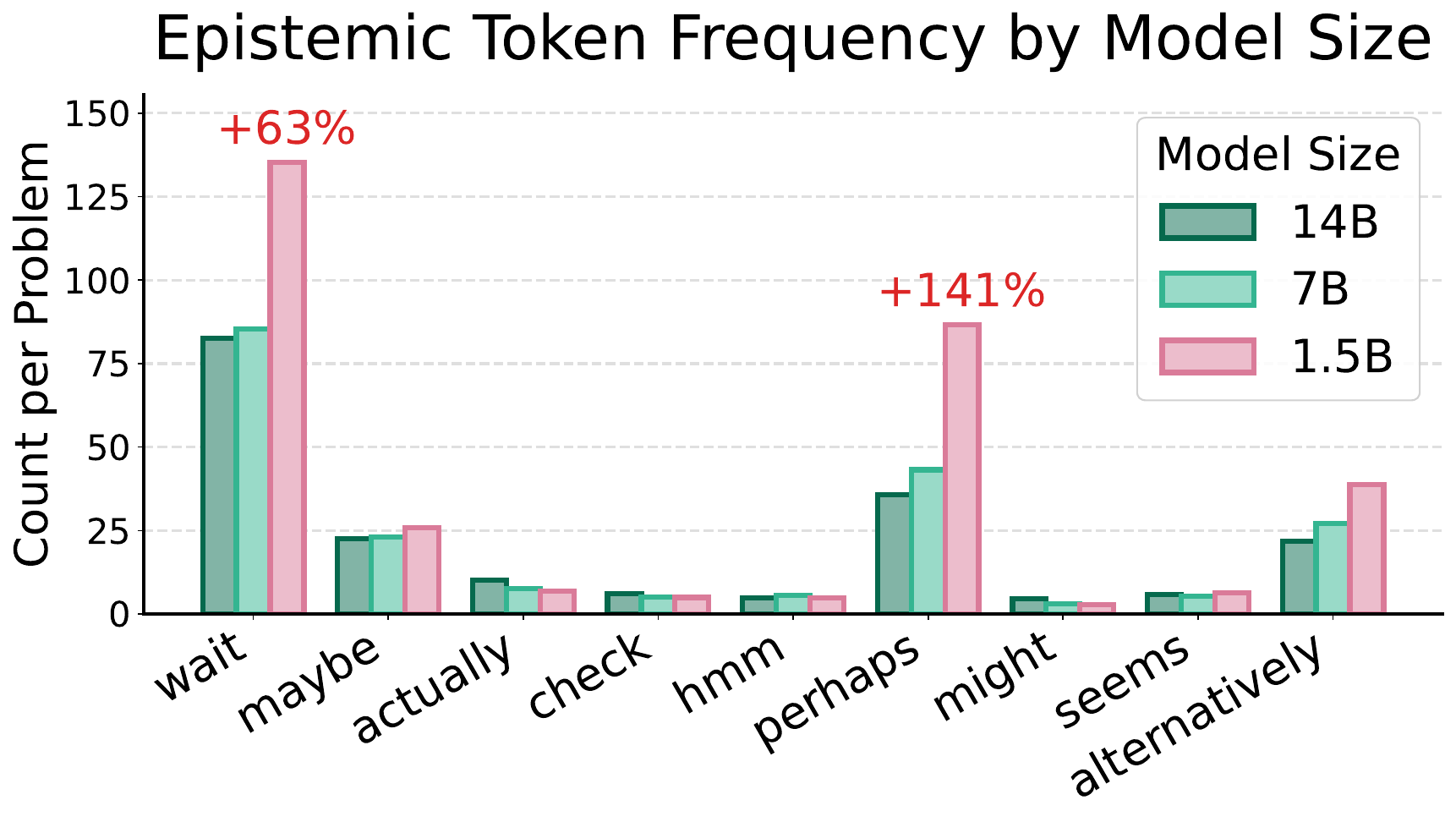}
    \includegraphics[width=0.49\linewidth]{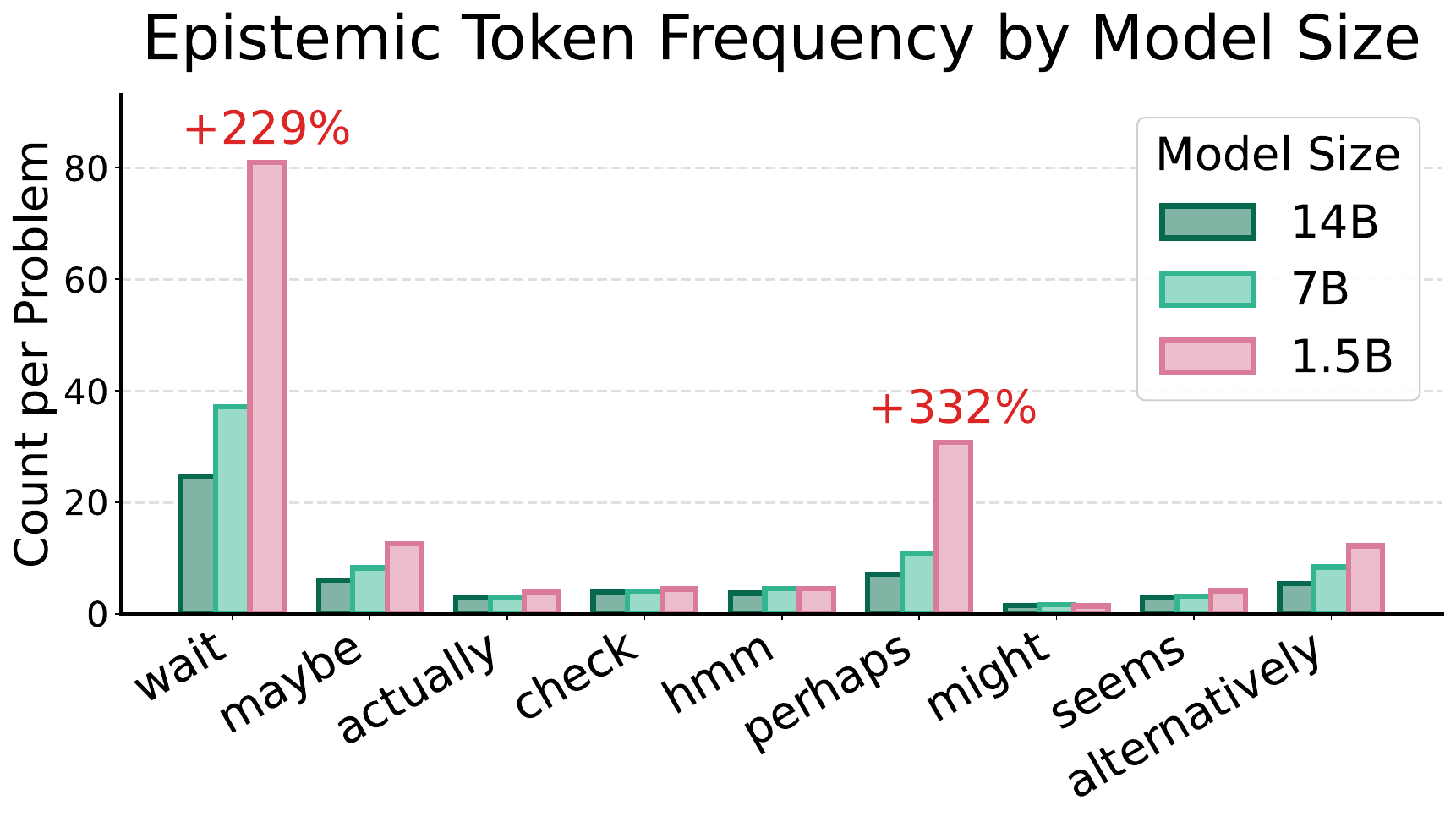}
    \includegraphics[width=0.49\linewidth]{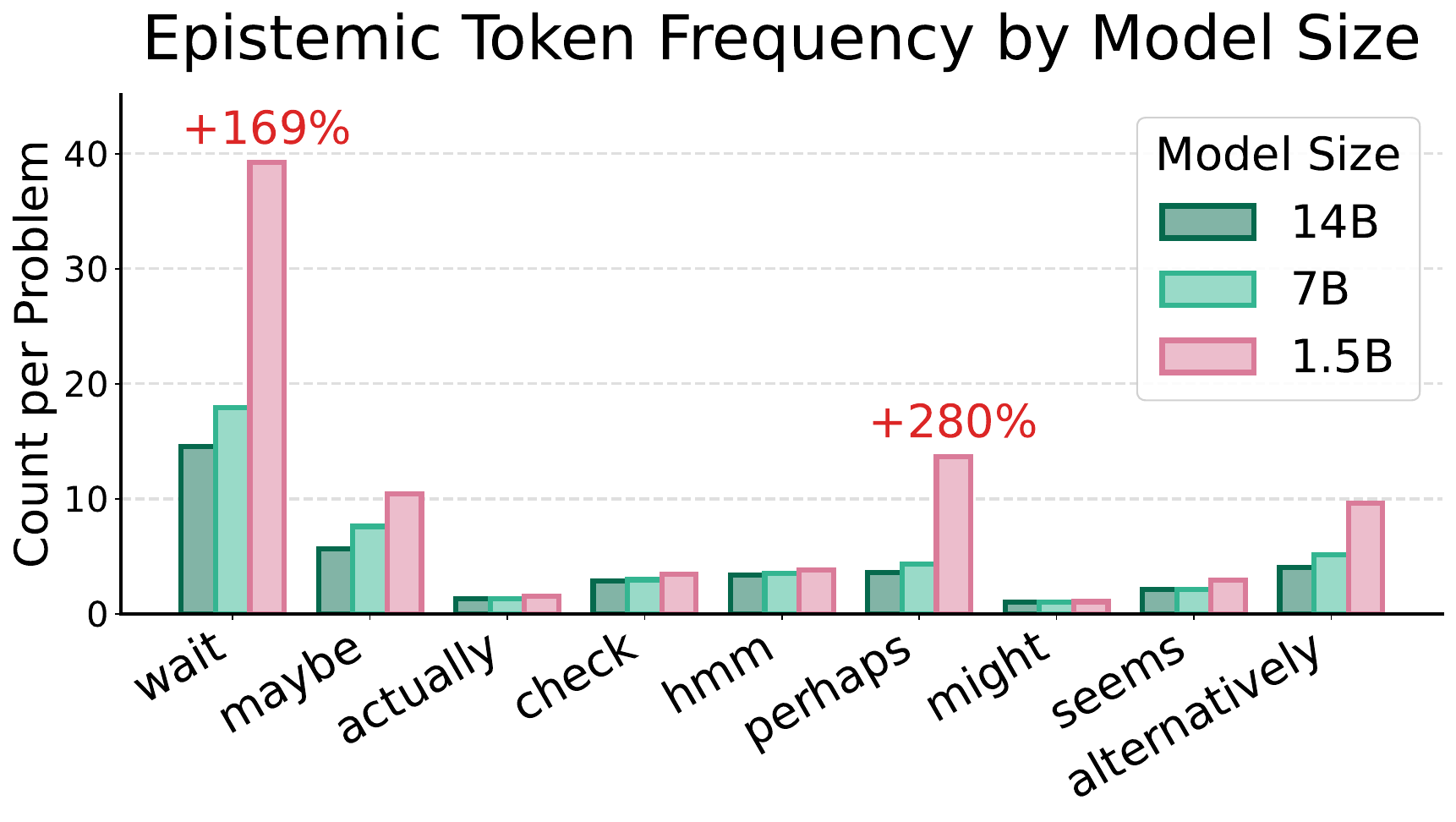}
    \caption{Token occurrence counts for DeepSeek-R1-Distill-Qwen-\{1.5B, 7B, 14B\} models on the AIME24, AIME25, AMC23, and MATH500 benchmarks.}
    \label{figure:uncertainty_epistemic_deepseek}
\end{figure}

\paragraph{Qwen3-Base LIMO.} Additionally, we investigated whether distilling a base model using a teacher model that heavily incorporates epistemic verbalization leads to increased generation of epistemic verbalizations depending on model size and problem difficulty. We observe that, in these distilled models as well, smaller models tend to generate more epistemic tokens as their uncertainty about a given problem increases.

Interestingly, unlike the analysis on DeepSeek-R1-Distill-Qwen models, where higher uncertainty substantially increased the use of the \textit{perhaps} token, these models do not show a marked increase in \textit{perhaps}. Instead, they primarily rely on the "Wait" token. As shown in Figure \ref{fig:limo_epistemic}, this may be because the LIMO dataset contains relatively few instances of perhaps. This suggests that patterns of uncertainty verbalization can shift depending on the training data distribution.

\begin{figure}[h!]
    \centering
   \includegraphics[width=0.49\linewidth]{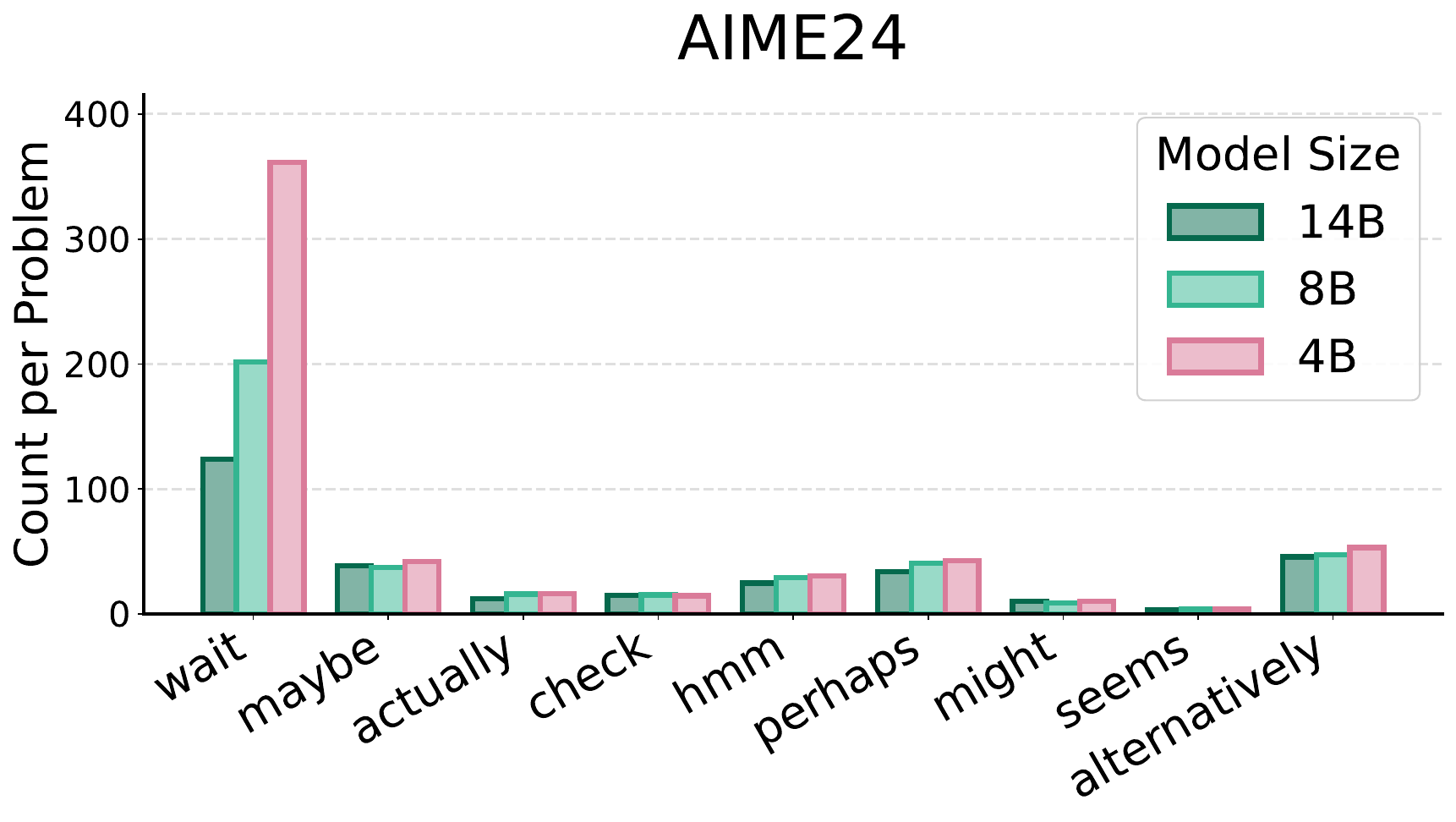}
    \includegraphics[width=0.49\linewidth]{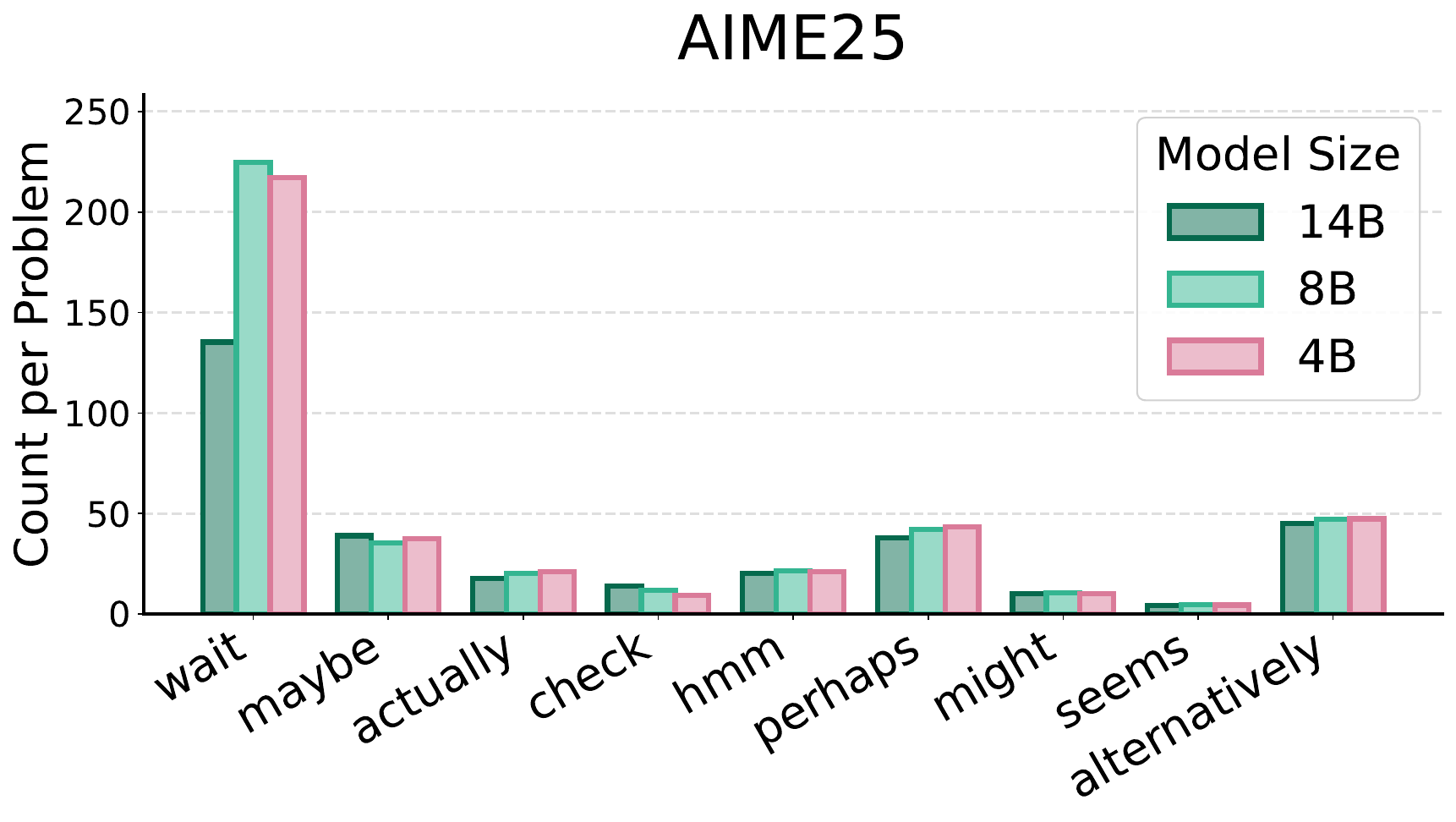}
    \includegraphics[width=0.49\linewidth]{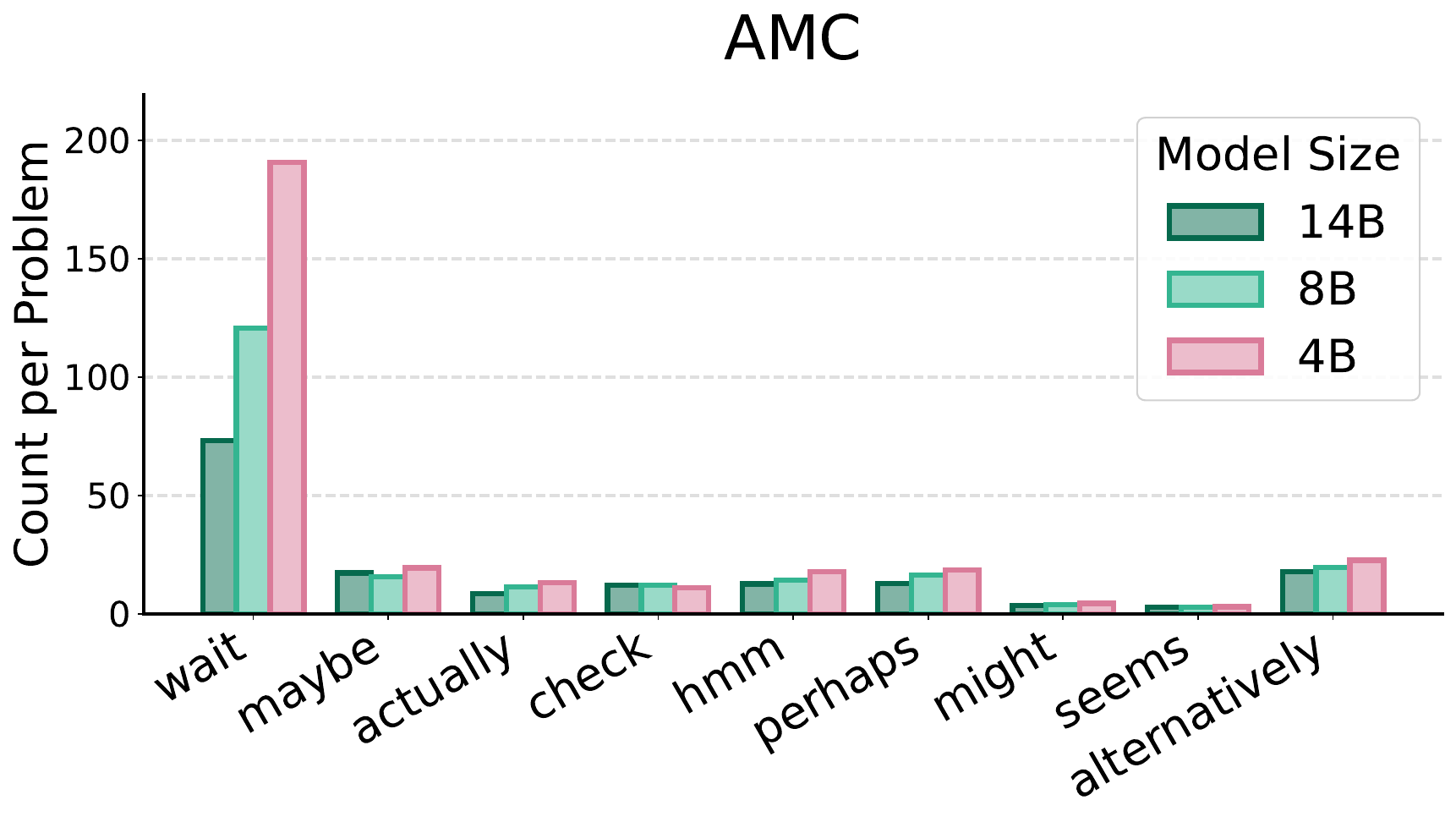}
    \includegraphics[width=0.49\linewidth]{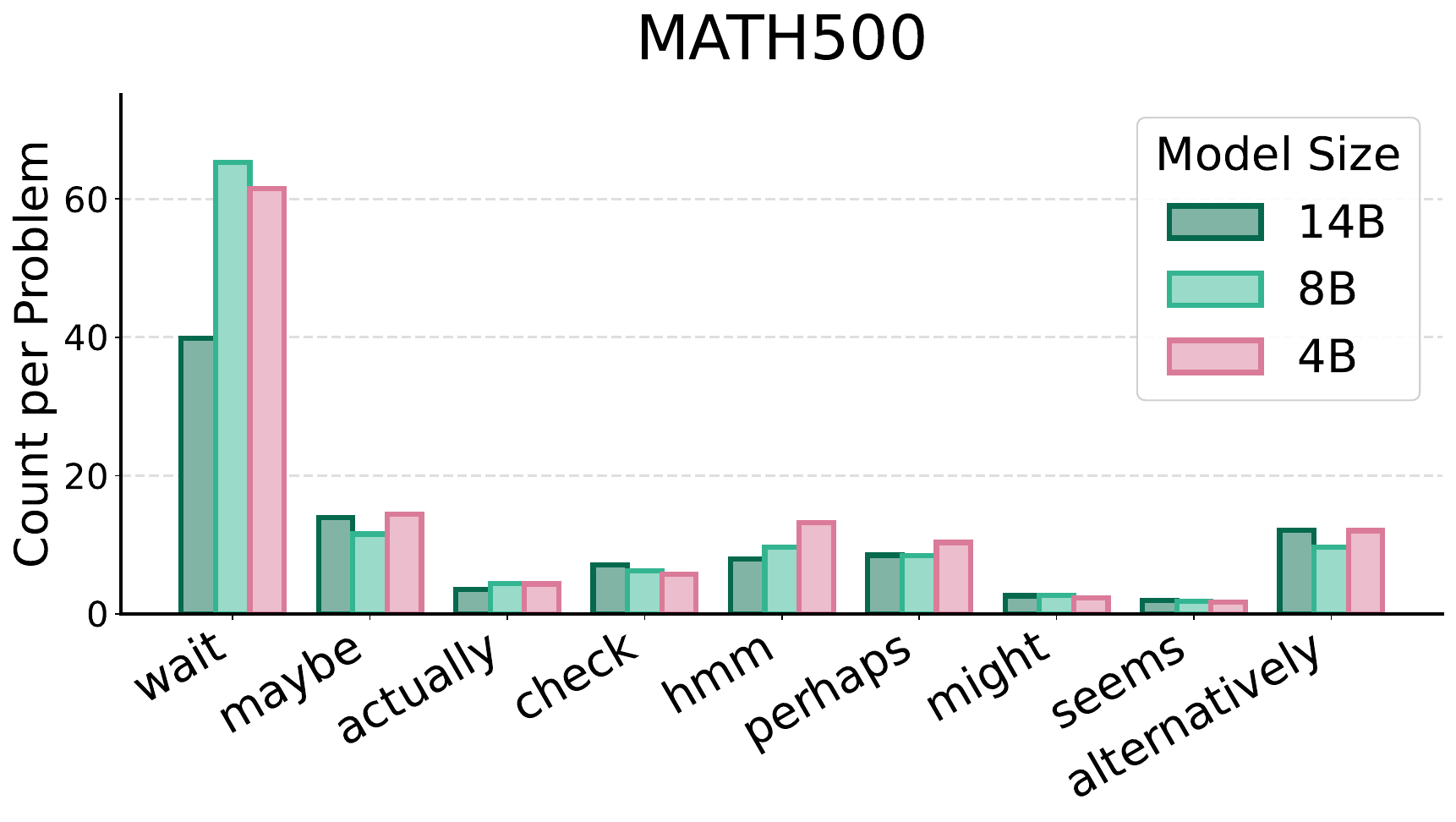}
    \caption{Token occurrence counts for Qwen3-\{4B, 8B, 14B\}-Base models on the AIME24, AIME25, AMC23, and MATH500 benchmarks.}
    \label{figure:uncertainty_epistemic_limo}
\end{figure}

\newpage
\section{Quantitative Analysis}
\subsection{Bypass Pathways under Epistemic Token Suppression} \label{appendix:bypassing_suppression}

In Section~\ref{section:test_time_control}, we suppress nine
epistemic tokens (\textit{wait}, \textit{hmm}, \textit{perhaps},
\textit{maybe}, \textit{actually}, \textit{alternatively},
\textit{seems}, \textit{might}, \textit{check}) via logit bias
$-100$ and observe performance drops of around 10\%. While
suppression substantially disrupts uncertainty expression,
performance does not collapse entirely. Analysis of the
reasoning traces reveals that the models sometimes route around
the masked tokens by substituting equivalents outside the banned
vocabulary, such as \textit{hold on}, \textit{no}, \textit{Let me},
or a paragraph break. Table~\ref{tab:suppression-bypass-patterns}
lists the observed substitutions. In these cases, the underlying doubt-and-verify pattern is preserved,
and only its lexical surface is rerouted.
\begin{table}[h!]
\centering
\small
\begin{tabular}{p{0.35\columnwidth} p{0.54\columnwidth}}
\toprule
\textbf{Baseline surface form} & \textbf{Suppression substitute} \\
\midrule
\textit{Wait, let me check\ldots}     & \textit{But hold on, let me\ldots} \\
\textit{Hmm,}                          & \textit{No, that's not quite right} / \textit{Hold on, no} \\
\textit{Maybe} / \textit{Perhaps}     & \textit{It's possible that\ldots} / \textit{Suppose\ldots} \\
\textit{Alternatively}                & \textit{Another way to see this} \\
\textit{Actually}                     & \textit{I realize} / \textit{On closer look} \\
\textit{Let me check}                 & \textit{Let me verify}  \\
\bottomrule
\end{tabular}
\caption{Bypass patterns observed when the epistemic tokens are suppressed during generation.}
\label{tab:suppression-bypass-patterns}
\end{table}

\subsection{Comparison Between Base and LIMO Distillation} \label{appendix:base_limo_trajectory}

\sethlcolor{custom_pink_light}

\begin{tcolorbox}[
colback=white,
title=\small{Qwen3-8B-Base AIME24 \#7},
colframe=custom_pink_dark,
coltitle=black,
colbacktitle=custom_pink,
breakable,
]
\small
Let's solve the problem step by step.

\textbf{Given:}

\begin{itemize}

    \item Let $\mathcal{B}$ be the set of rectangular boxes with surface area 54 and volume 23.

    \item Let $r$ be the radius of the smallest sphere that can contain each of the rectangular boxes that are elements of $\mathcal{B}$.

    \item We need to find $r^2$ in the form $\frac{p}{q}$, where $p$ and $q$ are relatively prime positive integers, and then compute $p + q$.

\end{itemize}
\textbf{Step 1: Define the Variables}
Let the dimensions of the rectangular box be $a$, $b$, and $c$. \\

\textbf{Step 2: Express the Given Conditions}

1. Surface Area: $2(ab + bc + ca) = 54 \implies ab + bc + ca = 27$

2. Volume: $abc = 23$  \\

\ldots\ldots\\

\textbf{Step 5: Use the AM-GM Inequality}

\hl{To minimize $a^2 + b^2 + c^2$, we can use the AM-GM inequality.} The minimum occurs when $a = b = c$.
Let $a = b = c$. Then:

$3a^2 = 27 \implies a^2 = 9 \implies a = 3$
$a^3 = 23 \implies 3^3 = 27 \neq 23$

This is not possible, so the minimum does not occur when $a = b = c$. \\

\textbf{Step 6: Use the Method of Lagrange Multipliers}

To find the minimum of $a^2 + b^2 + c^2$ subject to the constraints $ab + bc + ca = 27$ and $abc = 23$, we can use the method of Lagrange multipliers.

However, this method is complex for this problem. Instead, we can use the following approach. \\

\textbf{Step 7: Express $a^2 + b^2 + c^2$ in Terms of $ab + bc + ca$ and $abc$}

We know that: $a^2 + b^2 + c^2 = (a + b + c)^2 - 2(ab + bc + ca)$

But we don't have $a + b + c$. Instead, we can use the identity: $a^2 + b^2 + c^2 = (a + b + c)^2 - 2(ab + bc + ca)$

However, without $a + b + c$, we need another approach. \\

\hl{\textbf{Step 8: Use the Identity for $a^2 + b^2 + c^2$}}

We can use the identity: $a^2 + b^2 + c^2 = (a + b + c)^2 - 2(ab + bc + ca)$
But we don't have $a + b + c$. Instead, we can use the identity: $a^2 + b^2 + c^2 = (a + b + c)^2 - 2(ab + bc + ca)$

However, without $a + b + c$, we need another approach. \\

\ldots\ldots\\

\hl{\textbf{Step 244: Use the Identity for $a^2 + b^2 + c^2$}}

We can use the identity: $a^2 + b^2 + c^2 = (a + b + c)^2 - 2(ab + bc + ca)$

But we don't have $a + b + c$. Instead, we can use the identity: $a^2 + b^2 + c^2 = (a + b + c)^2 - 2(ab + bc + ca)$

However, without $a + b + c$, we need another approach. \\

\hl{\textbf{Step 245: Use the Identity for $a^2 + b^2 + c^2$}}

We can use the identity: $a^2 + b^2 + c^2 = (a + b + c)^2 - 2(ab + bc + ca)$

But we don't have $a + b + c$. Instead, we can use the identity: $a^2 + b^2 + c^2 = (a + b + c)^2 - 2(ab + bc + ca)$

\ldots\\

\end{tcolorbox}

\begin{tcolorbox}[
colback=white,
title=\small{LIMO Qwen3-8B-Base AIME24 \#7},
colframe=custom_green_dark,
coltitle=black,
colbacktitle=custom_green,
breakable,
]
\small
Okay, so I need to find the smallest sphere that can contain any rectangular box from the set $\mathcal{B}$, which has surface area 54 and volume 23. Then, I have to find $r^2$ where $r$ is the radius of that sphere, express it as a fraction $\frac{p}{q}$ in lowest terms, and compute $p + q$. \\

First, let me recall that the smallest sphere that can contain a rectangular box is the one whose diameter is equal to the space diagonal of the box. The space diagonal of a rectangular box with dimensions $a, b,$ and $c$ is $\sqrt{a^2 + b^2 + c^2}$. Therefore, the diameter of the sphere is equal to this space diagonal, so the radius $r$ would be half of that, right? So, $r = \frac{1}{2}\sqrt{a^2 + b^2 + c^2}$. Therefore, $r^2$ would be $\frac{a^2 + b^2 + c^2}{4}$.  \\

Therefore, to find the minimal $r^2$, I need to minimize $a^2 + b^2 + c^2$ over all possible boxes with surface area 54 and volume 23. So the problem reduces to minimizing $a^2 + b^2 + c^2$ given that $ab + bc + ca = 27$ (since surface area is $2(ab + bc + ca) = 54$) and $abc = 23$ (since volume is $abc = 23$).  \\

\hlpink{Hmm, so I have to minimize $a^2 + b^2 + c^2$ given $ab + bc + ca = 27$ and $abc = 23$.}  \\

I remember that in optimization problems with constraints, Lagrange multipliers might be helpful, but since these are symmetric constraints, maybe I can use some inequalities here. Alternatively, since it's about rectangular boxes, maybe the minimal space diagonal occurs when the box is a cube? Wait, but maybe not necessarily a cube, but perhaps when the dimensions are in some proportion? \\

Solving quadratic equation: $a = \frac{-2 \pm \sqrt{4 + 92}}{2} = \frac{-2 \pm \sqrt{96}}{2} = \frac{-2 \pm 4\sqrt{6}}{2} = -1 \pm 2\sqrt{6}$. \\

Since $a$ is a length, it must be positive. So, possible solutions are $a = 2$, or $a = -1 + 2\sqrt{6} \approx 3.899$, or $a = -1 - 2\sqrt{6}$ which is negative, so discard that. \\

So, if $a = 2$, then $c = 23 / 4 = 5.75$.
Check if this satisfies the first equation: $a^2 + 2ac = 4 + 2 \cdot 2 \cdot 5.75 = 4 + 23 = 27$. Yes, that works. So, in this case, the dimensions are $a = b = 2, c = 5.75$. \\

\ldots\ldots\\

Now, let's compute $a^2 + b^2 + c^2 = 4 + 4 + (23/4)^2 = 8 + 529/16 = (128 + 529)/16 = 657/16 \approx 41.0625$. \\

Alternatively, if we take the other root $a = -1 + 2\sqrt{6} \approx 3.899$, then $c = 23 / a^2 \approx 1.513$. Then, let's compute $a^2 + b^2 + c^2 \approx 15.204 + 15.204 + 2.289 \approx 32.697$. \hlgreen{Hmm, which is larger than 657/16... wait, $32.697$ is smaller than $41.0625$.} \\

\hlgreen{Wait, wait a second! I think I made a critical mistake here.}  \\

The problem asks for the smallest sphere that can contain each of the rectangular boxes that are elements of $\mathcal{B}$. Since $\mathcal{B}$ is the set of all boxes with surface area 54 and volume 23, the minimal sphere must work for all such boxes, so we need the maximum possible space diagonal among all such boxes, then the radius is half that. \hlgreen{Therefore, actually, we need to maximize the space diagonal, not minimize it.} \\

\hlgreen{Wait, so my earlier approach was incorrect.} I thought we needed to minimize the space diagonal, but actually, since the sphere must contain all boxes, the radius must be at least half the maximum space diagonal of any box in $\mathcal{B}$. Therefore, the minimal such sphere's radius $r$ is determined by the maximum of $a^2 + b^2 + c^2$. \\

Using $(a + b + c)^2 = a^2 + b^2 + c^2 + 2(ab + bc + ca) = (a^2 + b^2 + c^2) + 54$.
So, to maximize $a^2 + b^2 + c^2$, we must maximize $a + b + c$. \\

From my symmetric cases analysis:
Case 1 ($a=2$): $a+b+c = 2+2+5.75 = 9.75$.
Case 2 ($a \approx 3.899$): $a+b+c \approx 3.899+3.899+1.513 = 9.311$. \\

The maximum occurs at $a=2, b=2, c=5.75$.
Max $a^2 + b^2 + c^2 = 657/16$. \\

Therefore, the minimal radius $r$ of the sphere that can contain \textit{any} box in $\mathcal{B}$ satisfies:
$r^2 = \frac{1}{4} \cdot \max(a^2 + b^2 + c^2) = \frac{1}{4} \cdot \frac{657}{16} = \frac{657}{64}$. \\

Checking if $\frac{657}{64}$ is in lowest terms. $657 = 3^2 \cdot 73$ and $64 = 2^6$. No common factors.
$p = 657, q = 64$.
$p + q = 657 + 64 = 721$.
\end{tcolorbox}

\section{Examples of Reactive and Proactive Correction}
\label{appendix:example_two_correction_mode}

Table~\ref{tab:correction_examples} lists representative examples of the two correction modes discussed in Sections~\ref{sec:collapse_escape} and~\ref{sec:proactive}, pairing each model's preceding computation with the verbalization that triggers correction. \textit{Reactive} corrections, observed in both LLMs and LRMs, are triggered by an explicit failure in the preceding computation (e.g., a violated constraint, a contradiction between derivations, or a hallucinated runtime error). \textit{Proactive} corrections, observed only in LRMs, arise without any such surfaced error.

\section{Use of AI Assistants}
\label{appendix:ai_assistants}

We used Claude and ChatGPT to polish and refine text drafted by the authors, to create Figure~\ref{fig:concept}, and to improve the design of plots. As described in the main text, LLMs including GPT-5 were used as automated judges and evaluation tools as part of the experimental pipeline. No AI assistants were used to generate research ideas or design experiments.

\begin{table*}[!t]
\centering
\small
\begin{tabular}{p{3.0cm}p{5.9cm}p{5.9cm}}
\toprule
\textbf{Model} & \textbf{Preceding computation} & \textbf{Trigger verbalization} \\
\midrule
\multicolumn{3}{l}{\textit{LLM --- reactive correction only}} \\[3pt]

Qwen2.5-32B-Instruct
  & Diagonal midpoint equation yields $x = 6$, which explicitly violates the given constraint $x > 7$.
  & \textit{``However, we are given that $x > 7$. Therefore, we need to re-evaluate the vertices.''} \\[6pt]

Qwen2.5-7B
  & Substitution gives $y = \tfrac{5}{2z}\big|_{z=5/2} = 1$, violating the condition $y > 1$.
  & \textit{``We made an error in the interpretation of the equations.''} \\[6pt]

Qwen2.5-Math-7B
  & \texttt{solve(...)} raises \texttt{TypeError: list indices must be integers or slices, not Symbol}.
  & \textit{``It seems there was an issue with how the solution was being accessed.''} \\

\midrule
\multicolumn{3}{l}{\textit{LRM --- reactive correction}} \\[3pt]

DeepSeek-R1-Distill-Qwen-32B
  & $AP = \dfrac{2025 - 4225}{286} = \dfrac{-2200}{286} < 0$
  & \textit{``Wait, negative length? That can't be.''} \\[6pt]

DeepSeek-R1-Distill-Qwen-32B
  & Cayley--Menger: $V = \frac{20\sqrt{2}}{9}$;\quad coordinate method: $V = \frac{160}{3}$
  & \textit{``That contradicts the earlier volume \ldots\ So, clearly, something is wrong.''} \\[6pt]

Qwen3-14B
  & Numerical evaluation shows $f(2)\approx4.16$, $f(5)\approx4.83$, $f(10)\approx4.61$: non-monotone, contradicting the earlier claim of monotonic increase.
  & \textit{``Wait, so it increases up to some point and then decreases? That contradicts my previous thought.''} \\

\midrule
\multicolumn{3}{l}{\textit{LRM --- proactive correction}} \\[3pt]

DeepSeek-R1-Distill-Qwen-32B
  & Serially recalling symmedian length formulas. \textbf{(each is immediately replaced by another with no computation performed.)}
  & \textit{``Wait, perhaps I should look for a different approach.''} \\[6pt]

Qwen3-14B
  & Game-tree analysis of $n=2$ is proceeding correctly. \textbf{(no error has surfaced)}.
  & \textit{``Wait, maybe I need to think again.''} \\[6pt]

DeepSeek-R1-Distill-Qwen-32B
  & Correctly identifying $n=2$ as a losing position for the moving player. \textbf{(no contradiction present)}.
  & \textit{``Wait, maybe I'm getting confused.''} \\

\bottomrule
\end{tabular}
\caption{Representative examples of reactive and proactive correction in LLMs and LRMs.
  LLMs exhibit predominantly reactive correction, triggered by explicit computational failures or contradiction. LRMs additionally exhibit proactive correction, where doubt arises without any surfaced error.}
\label{tab:correction_examples}
\end{table*}

\end{document}